\documentclass{article}
\usepackage[bindingoffset=0cm,a4paper,centering,textheight=200mm,textwidth=120mm]{geometry}
\usepackage{amstext,amssymb,latexsym}
\usepackage{graphicx,tabularx}%,pict2e} 
\usepackage{amsmath}
\usepackage{showlabels}
\usepackage{datetime}

\usepackage[colorlinks=true,%
            linkcolor=blue,%
            citecolor=blue,%
            urlcolor=blue]{hyperref}

\newtheorem{definition}{Definition}[section]
\newtheorem{theorem}[definition]{Theorem}
\newtheorem{remark}[definition]{Remark}

\newtheorem{lemma}[definition]{Lemma}
\newtheorem{construction}[definition]{Construction}
\newtheorem{example}[definition]{Example}
 \newenvironment{proof}{\begin{trivlist}\item[]{\bf Proof.}}{\hspace*{\fill} $\blacksquare$ \end{trivlist}}

 \newcommand{\eqinv}{\mbox{Eq$_{\rm inv}$}}
 
\newcommand{\w}[1]{\text{#1}}
\newcommand{\ws}[1]{~\text{#1}~}

\ifx\text\undefined
  \newcommand{\text}[1]{\relax
    \ifmmode\mathchoice
      {\hbox{\the\textfont0\relax#1}}%
      {\hbox{\the\textfont0\relax#1}}%
      {\hbox{\the\scriptfont0\relax#1}}%
      {\hbox{\the\scriptscriptfont0\relax#1}}%
    \else{\relax#1}\fi}
  \fi
\newcommand{\comma}{,\ldots, }
\newcommand{\half}{\frac{1}{2}}

\newcommand{\Bf}{{\bf f}}
\newcommand{\Bm}{{\bf m}}
\newcommand{\vare}{\varepsilon}
\newcommand{\CX}{\mathcal{X}}
\newcommand{\CA}{\mathcal{A}}
\newcommand{\CP}{\mathcal{P}}
\newcommand{\CB}{\mathcal{B}}
\newcommand{\CE}{\mathcal{E}}

\newcommand{\CF}{\mathcal{F}}

\newcommand{\CJ}{\mathcal{J}}

\newcommand{\inc}{\w{{\bf in}}}
\newcommand{\exc}{\w{{\bf out}}}
\newcommand{\und}{\w{{\bf und}}}
\newcommand{\tuple}[1]{\w{$\langle #1 \rangle$}}
\renewcommand{\iff}{\leftrightarrow}

\newcommand{\eqr}[1]{\w{\upshape (#1)}}

\begin{document}
\title{Probabilistic Argumentation. An Equational Approach}

%\date{26 October 2014\protect\\
%Compiled  \today\\
%{\small file: Dropbox/CURRENT/532/ARXIV/532-PAEA.tex}}
\date{}
%\date{Compiled \today\ at \currenttime}

\author{D. Gabbay\\
King's College London,\\ Department of Informatics,\\ 
London, WC2R 2LS, UK, and\\
Department of Computer Science,\\
Bar Ilan University, and\\
University of Luxembourg\\
{\tt dov.gabbay@kcl.ac.uk}
\and
O. Rodrigues\\
King's College London,\\ Department of Informatics,\\ 
London, WC2R 2LS, UK\\
{\tt odinaldo.rodrigues@kcl.ac.uk}\\
}
\maketitle    

\begin{abstract}
There is a generic way to add any new feature to a system. It involves
1) identifying the basic units which build up the system and 2)
introducing the new feature to each of these basic units. 

In the case where the system is {\em argumentation} and the feature 
is probabilistic we have the following. The basic units are: {\bf a.} 
the nature of the 
arguments involved; {\bf b.} the membership relation in the set $S$ of
arguments; {\bf c.} the attack relation; and {\bf d.} the choice of
extensions.

Generically to add a new aspect (probabilistic, or fuzzy, or
temporal, etc) to an argumentation network \tuple{S,R} can be done by
adding this feature to each component {\bf a}--{\bf d}. This is a
brute-force method and may yield a non-intuitive or meaningful result.

A better way is to meaningfully translate the object system into
another target system which does have the aspect required and then let
the target system endow the aspect on the initial system. In our case
we translate argumentation into classical propositional logic and get
probabilistic argumentation from the translation.

Of course what we get depends on how we translate.

In fact, in this paper we introduce probabilistic semantics to
abstract argumentation theory based on the equational approach to
argumentation networks. We then compare our semantics with existing
proposals in the literature including the approaches by M. Thimm and
by A. Hunter. Our methodology in general is discussed in the
conclusion.
\end{abstract}

\section{Introduction}
The objective of this paper is to provide some orientation
to underpin probabilistic semantics for abstract argumentation. We feel that
a properly developed probabilistic argumentation framework cannot be obtained
by simply imposing an arbitrary probability distribution on the components 
of an argumentation system that does not agree with the dynamic aspects
of these networks. We need to find a probability distribution that is 
compatible with their underlying motivation.

We shall use the methodology of ``Logic by Translation'', which works
as follows: Given a new area for which we want to study certain aspect
properties AP, we translate this area to classical logic, study AP in
classical logic and then translate back and evaluate what we have
obtained. 

%We will examine in particular, papers by M. Thimm \cite{5} and
%A. Hunter \cite{6,7}.  

Let us start by looking at interpretations of an abstract
argumentation network $\tuple{S,R}$, $S \neq \varnothing$, $R\subseteq
S\times S$, into logics which already have probabilistic
versions. This way we can import the probability aspect from there and
it will have a meaning.  We begin with translating abstract
argumentation frames into classical propositional logic. In the
abstract form, the elements of $S$ are just atoms waiting to be
instantiated as arguments coming from another application system. $R$
may be defined using the source application system or may represent
additional constraints. At any rate, in this abstract form, $S$ is just
a set of atoms and all we have about it is $R$. In translating
\tuple{S,R} into classical propositional logic, we view $S$ as a set
of atomic propositions and we use $R$ to generate a classical theory
$\Delta_{\tuple{S,R}}$. Consider Figure~\ref{532-F1},
which describes the basic attack formation of all the attackers
$Att(x) =\{ y \in S \; | \; (y,x) \in R\}= \{y_1\comma y_n\}$ of 
the node $x$ in a network \tuple{S,R}. 

\begin{figure}
\centering

\input{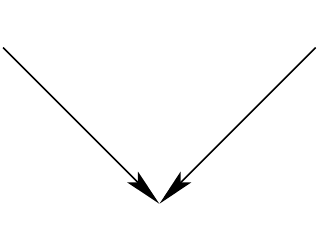_t}

\caption{Basic attack formation in an argumentation network.}\label{532-F1}
\end{figure}

The essential logic translation of the attack on each node $x$ is given by 
\eqr{E1} below, where $x,y_i$ are propositional symbols representing the
elements $x,y_i \in S$:

\[
x \leftrightarrow \bigwedge_i\neg y_i\eqno{\eqr{E1}}
\]

So \tuple{S,R} corresponds to a classical {\em propositional} 
theory $\Delta_{\tuple{S,R}}=\{ x \leftrightarrow \bigwedge_i\neg y_i \; 
| \; x \in S\}$.\footnote{If there is a logical relationship between the 
arguments of $S$ that can be captured by formulae, then we can alternatively 
instantiate $x \longmapsto \varphi_x$, giving $\Delta_{\tuple{S,R}}=\{\varphi_x 
\iff \bigwedge_i \neg \varphi_y \; | \; x,y \in S \}$.}
Note that in classical logic, this theory may be 
inconsistent and have no models. For example, if $S$ contains a single 
node $x$ and $R$ is $\{(x,x)\}$, i.e., the network has a single
self-attacking node, then the associated theory is  $\{x \iff \neg x\}$, 
which has no model. For this reason it is convenient to regard these 
theories as theories of Kleene three-valued logic, with values in 
$\{ 0, \frac{1}{2}, 1\}$. In this 3-valued semantics, a valuation
would satisfy $x \iff \neg x$ if and only if it gives the value 
$\frac{1}{2}$ to $x$.\footnote{In Kleene's logic, one can interpret 
$\neg$ as complement to $1$; $\wedge$ as $\min$; and $\vee$ as $\max$. 
Thus, if the values of $A,B$ are $v(A),v(B)$, then $v(\neg A) = 1- v(A)$, 
$v(A \wedge B)= \min(v(A),v(B))$ and $v(A \vee B)=\max(v(A),v(B))$.}

If we consider the {\em equational approach} \cite{5}, then we can write
\[x =\bigwedge_i \neg y_i \eqno {\eqr{E2}}\] 
where \eqr{E2} is a numerical equation over the real interval $[0,1]$, with
conjunction and negation interpreted as numerical functions expressing the 
correspondence of the values of the two sides. 

A complete extension of $\tuple{S,R}$ is a solution to the equations of the
form of \eqr{E2} when they are viewed as a set of Boolean equations
in Kleene's 3-valued logic with values $\left\{0, \half, 1\right\}$, where
\begin{eqnarray}
x=0    &\ws{means that}& x=\exc \ws{(at least one attacker $y_i = \inc$)} \label{eq:c1}\\[1ex] 
x=1    &\ws{means that}& x=\inc \ws{(all attackers $y_i = \exc$)} \label{eq:c2}\\[1ex]
x=\half&\ws{means that}& \w{\parbox{1.5cm}{$x=\und$}}\ws{(no attacker $y_i=\inc$ and at least} \label{eq:c3}\\[-1.5ex]
       &               & \w{\parbox{1.5cm}{~}}      \ws{~one attacker $y_j = \und$)}\nonumber
\end{eqnarray}

The acceptability semantics above can be re-written in terms of the 
semantics of Kleene's logic as 
\[
v(x)=\min\{1 - v(y_i)\}
\]
which in equational form can be simplified to
\[
x=1-\max \{y_i\} \eqno\eqr{E2*}
\]
The reader should note that we actually solve the equations over  
the unit interval $[0,1]$ and project onto Kleene's 3-valued logic 
by letting 
\[\begin{array}{llll}
x =0     &\mbox{mean} &x=\exc &\w{(at least one attacker $y_i = \inc$)}\\[1ex] 
0 < x < 1&\mbox{mean} &x=\und &\w{(no attacker $y_i=\inc$ and at least}\\[0.2ex]
         &            &       &\w{~one attacker $y_j = \und$)}         \\[1ex]
x = 1    &\mbox{mean} &x=\inc &\w{(all attackers $y_i = \exc$)}        \\[1ex] 
\end{array}
\]
Now there are probabilistic approaches to two-valued classical
logic. The simplest two methods are described in Gabbay's book
{\em Logic for Artificial Intelligence and Information Technology} 
 \cite{1}.
Our idea is to bring the probabilistic approach through the above 
translation into argumentation theory.

Let us start with a description of the probabilistic approaches to
classical propositional logic.
\paragraph{Method 1: Syntactic.}
Impose probability $P (q)$ on the atoms $q$ of the language and propagate
this probability to arbitrary well-formed formulas (wffs).  So if
$\varphi (q_1\comma q_m)$ is built up from the atoms $q_1\comma q_m$,
we can calculate $P(\varphi)$ if we know $P(q_i)$, $i=1\comma m$.

\paragraph
{Method 2: Semantic.}  Impose probability on the models of the
language of $\{q_1\comma q_m\}$.  The totality of models is the space
$W$ of all $\{0,1\}$-vectors in $2^m$.  We give values $P(\vare)$, for
any $\vare\in 2^m$, with the restriction that $\Sigma_{\vare\in 2^m}
P(\vare)=1$.  The probability of any wff $\varphi$ is then
\[
P(\varphi)=\Sigma_{\vare\Vdash \varphi} P(\vare) \eqno \eqr{P1}
\]

The motivation for the syntactical Method 1 is that the atoms
$\{q_1\comma q_m\}$ are all independent. So for example, the date of
birth of a person ($p$) is independent of whether it is going to rain
heavily on that person's 21st birthday ($q$). However, if we want to
hold a birthday party $r$ in the garden on the 21st birthday, then we
have that $q$ attacks $r$.

If, on the other hand, we have:
\begin{quote}
$a=$ John comes to the party\\[1ex]
 $b=$ Mary comes to the party
\end{quote}
then $a$ and $b$ may be dependent, especially if some relationship
exists between John and Mary. We may decide that the probability of
$a\wedge b$ is $0$, but the probabilities of $\neg a\wedge b$ and of
$a\wedge\neg b$ are $\frac{1}{4}$ each and the probability of $\neg
a\wedge\neg b$ is $\half$. Assigning probability in this way depends
on the likelihood we attach to a particular situation (model). This is the 
semantic approach.

Example~\ref{ex:p1} shows that these two methods are orthogonal.

\begin{example}
\label{ex:p1}
What can $\Delta_{\tuple{S,R}}$ mean in classical logic? It is a 
generalisation of the ``Liar's paradox''. $x$ attacking itself is
like $x$ saying ``I am lying'': $x=\top$ if and only if $x=\bot$.
Figure~\ref{532-F1} represents $y_i$ saying $x$ is a lie.
$\Delta_{\tuple{S,R}}$ represents a system of lying accusations:
a {\em community liar paradox}.

Similarly, $S$ can represent people possibly invited to a birthday
party. $y \rightarrow x$ means $y$ saying ``if I come, $x$ cannot
come''. So Figure~\ref{532-F1} is saying ``invite $x$ if and only if 
you do not invite any of the $y_i$''. 

Suppose we instantiate $x \longmapsto \varphi_x$. Then we must
have $P(\varphi_x)=P(\bigwedge_i \neg \varphi_{y_i})$. However, there
may be also a connection between $\varphi_x$ and some $\varphi_{y_k}$, 
e.g., $\varphi_x \vdash \varphi_{y_k}$. This will impose further
restrictions on $P(\varphi_x)$ and $P(\varphi_{y_k})$, and it may be
the case that no such probability function exists.
\end{example}

\begin{remark}\label{532-R2}
The two approaches are of course, connected. If we are given a
probability on each $q_i$, then we get probability on each $\vare\in
2^m$ by letting
\[
P(\vare) =\Pi_{\vare\Vdash q} P (q) \times \Pi_{\vare\Vdash \neg q}
(1-P(q)) \eqno \eqr{P2}
\]
The $q_i$'s are considered independent, so the probability of
$\bigwedge_i\pm q_i$ is the product of the probabilities
\[
\textstyle P(\bigwedge_i \pm q_i)=\Pi_iP(\pm q_i)
\]
where $P (\neg q_i) =1-P(q_i)$ and the probability of $A\vee B$ is
\[
P(A\vee B) =P(A) +P(B)
\]
when $\Vdash \neg (A\wedge B)$, as is the case with disjuncts in a
disjunctive normal form.

So, for example
\[\begin{array}{rcl}
P((a\wedge b)\vee (a\wedge\neg b)) & = &P(a\wedge b) +P(a\wedge\neg b)\\[1ex]
 & = & P (a)P(b) +P(a) (1-P(b))\\[1ex]
 & = & P(a)(P(b) +1-P(b))\\[1ex]
 & = & P (a).
\end{array}
\]
\end{remark}

\section{The syntactical approach (Method 1)}

Let us investigate the use of the syntactical approach.

Let \tuple{S,R} be an argumentation network.  In the equational approach, 
according to the syntactical Method 1, we assign probabilities to all the atoms and are
required to solve the equation \eqr{E3} below for each $x$,  where
$Att(x)=\{y_i\}$ and $x$ and all $y_i$ are numbers in $[0,1]$:
\[
P(x) =P(\bigwedge_i\neg y_i), \eqno{\eqr{E3}}
\]
Since in Method 1, all atoms are independent, \eqr{E3} is equivalent to \eqr{E3*}:
\[
P(x) =\Pi_i(1-P (y_i)).  \eqno\eqr{E3*}
\]

Such equations always have a solution.

Let us check whether this makes sense.  Let us try to identify the
argument $x$ equationally with its probability, namely we let $P(x)=x$.
\begin{quotation}
\noindent If $x=\inc$, let $P(x) =1$\\[1ex]
If $x=\exc$, let $P(x)=0$.\\[1ex]
If $x=\und$, let $0< P(x) < 1$ 
\end{quotation}
to be determined by the solution to the equations.

Equation \eqr{E3*} becomes, under $P(x) =x$, the following:
\[
x=\Pi (1-y_i) \ws{for} x\in S.  \eqno{\eqr{E4}}
\]

This is the \eqinv\ equation in the equational approach (see
\cite{5}).

The following definition will be useful in the interpretation
of values from $[0,1]$ and their counterparts in Caminada's labelling 
functions.

%% \begin{definition}
%% \label{def:CP-GR-translation}
%% A labelling function $\lambda$ and a valuation function $\Bf$ can be
%% mapped into each other in the following way.
%% 
%% \begin{center}
%% \begin{tabular}{ccc|ccc}
%% \hline
%% $\lambda(x)=\inc$ & $\rightarrow$ & $\Bf(\lambda)(x)=1$ & $\Bf(x)=1$ & $\rightarrow$ & $\lambda(\Bf)(x)=\inc$ \\
%% $\lambda(x)=\exc$ & $\rightarrow$ & $\Bf(\lambda)(x)=0$ & $\Bf(x)=0$ & $\rightarrow$ & $\lambda(\Bf)(x)=\exc$ \\
%% $\lambda(x)=\und$ & $\rightarrow$ & $\Bf(\lambda)(x)=\frac{1}{2}$ & $\Bf(x)\in (0,1)$ & $\rightarrow$ & $\lambda(\Bf)(x)=\und$ \\[0.4ex] \hline
%% \end{tabular}
%% \end{center}
%% \end{definition}

\begin{definition}
\label{def:CP-GR-translation}
A valuation function $\Bf$ can be mapped into a labelling function $\lambda(\Bf)$ as
follows.

\begin{center}
\begin{tabular}{ccc}
\hline
$\Bf(x)=1$ & $\rightarrow$ & $\lambda(\Bf)(x)=\inc$ \\[1ex]
$\Bf(x)=0$ & $\rightarrow$ & $\lambda(\Bf)(x)=\exc$ \\[1ex]
$\Bf(x)\in (0,1)$ & $\rightarrow$ & $\lambda(\Bf)(x)=\und$ \\[1ex] \hline
\end{tabular}
\end{center}
\end{definition}

What do we know about \eqinv?  We quote the following from \cite{5}.

\begin{theorem}\label{532-f1}
Let \Bf\ be a solution to equations \eqr{E4}. Then $\lambda(\Bf)$ defined according
to Definition~\ref{def:CP-GR-translation} is a legal Caminada labelling (see \cite{4}) 
and leads to a complete extension.
%% \[
%% \lambda (f) (x)= \left\{
%% \begin{array}{l}
%% \inc,\ws{if} f(x) =1\\ 
%% \exc,\ws{if} f(x) =0\\ 
%% \und,\ws{if} 0 < f(x) < 1.
%% \end{array}
%% \right.
%% \]
\end{theorem}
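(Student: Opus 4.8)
The plan is to read the three labelling conditions directly off the defining equation \eqr{E4} (the \eqinv\ equation), namely $\Bf(x)=\prod_{y_i\in Att(x)}\bigl(1-\Bf(y_i)\bigr)$ for every $x\in S$, using two elementary facts about a \emph{finite} product of reals all lying in $[0,1]$: it equals $1$ precisely when every factor equals $1$; it equals $0$ precisely when at least one factor equals $0$; and in every other case it lies strictly between $0$ and $1$. Pulling these three regimes back through $t\mapsto 1-t$ turns them into statements about the attacker values $\Bf(y_i)$, which in turn become statements about the attacker labels $\lambda(\Bf)(y_i)$ via Definition~\ref{def:CP-GR-translation}. As a preliminary I would dispose of the degenerate case $Att(x)=\varnothing$: the empty product gives $\Bf(x)=1$, hence $\lambda(\Bf)(x)=\inc$, which is exactly the label an unattacked argument must receive.

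Next comes the three-way case analysis on $\Bf(x)$. If $\lambda(\Bf)(x)=\inc$, i.e.\ $\Bf(x)=1$, then by \eqr{E4} every factor $1-\Bf(y_i)$ equals $1$, so $\Bf(y_i)=0$ and $\lambda(\Bf)(y_i)=\exc$ for all attackers of $x$; hence $x$ is legally $\inc$. If $\lambda(\Bf)(x)=\exc$, i.e.\ $\Bf(x)=0$, then some factor vanishes, so $\Bf(y_j)=1$ and $\lambda(\Bf)(y_j)=\inc$ for at least one attacker; hence $x$ is legally $\exc$. If $\lambda(\Bf)(x)=\und$, i.e.\ $0<\Bf(x)<1$, then positivity of the product forces $\Bf(y_i)\neq 1$ for every attacker (no attacker is $\inc$), while the product being strictly below $1$ forces $\Bf(y_j)>0$ for some attacker, and together with $\Bf(y_j)\neq 1$ this gives $0<\Bf(y_j)<1$, i.e.\ $\lambda(\Bf)(y_j)=\und$; hence $x$ has no $\inc$ attacker and at least one $\und$ attacker, so it is legally $\und$.

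Since the cases $\Bf(x)=1$, $\Bf(x)=0$ and $0<\Bf(x)<1$ partition $[0,1]$ and each has just been shown to imply the matching legality condition, the converse implications — hence the biconditional form of the completeness conditions — follow automatically, so every argument is legal under $\lambda(\Bf)$ and $\lambda(\Bf)$ is a legal (equivalently, complete) Caminada labelling. To obtain the last clause, ``leads to a complete extension'', I would invoke the standard bijection between complete labellings and complete extensions (see \cite{4}): the set $\{x\in S\mid \lambda(\Bf)(x)=\inc\}$ is then a complete extension of \tuple{S,R}.

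The one genuine subtlety, and the step I would treat most carefully, is the appeal to finiteness of $Att(x)$: it is exactly finiteness that validates ``product $=0$ iff some factor $=0$'' and ``product $=1$ iff all factors $=1$'', and hence keeps the $\exc$ and $\und$ cases cleanly separated; for infinite attacker sets the $\exc$/$\und$ boundary would need a separate limiting argument. Everything else is routine bookkeeping, and the existence of a solution $\Bf$ to \eqr{E4} is the hypothesis, so nothing has to be proved about it.
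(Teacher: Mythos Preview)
Your argument is correct and is exactly the natural one: read off the three Caminada conditions from the arithmetic of a finite product in $[0,1]$ via \eqr{E4}, then invoke the standard labelling/extension correspondence. Note, however, that the paper does not actually prove this theorem in-text; it is stated as a quotation from \cite{5} (``We quote the following from \cite{5}''), so there is no in-paper proof to compare against. Your write-up would serve perfectly well as the omitted proof, and your remark about finiteness of $Att(x)$ is a sensible caveat that the paper leaves implicit.
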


\begin{theorem}\label{532-f2}
Let $\lambda_0$ be a legal Caminada labelling leading to a preferred extension.  
Then there exists a solution $f_0$, such that $\lambda_0 =\lambda (f_0)$.
\end{theorem}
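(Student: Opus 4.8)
\emph{Plan.} The idea is to build $f_0$ in two layers: first fix the ``decided'' coordinates, then solve a fixed-point problem on the ``undecided'' sub-network. Write $E_0$ for the preferred extension determined by $\lambda_0$ (its set of $\inc$-labelled arguments) and $U=\{x\in S\mid\lambda_0(x)=\und\}$, and put $R_U=R\cap(U\times U)$. First I would set $f_0(x)=1$ for $x\in E_0$ and $f_0(x)=0$ whenever $\lambda_0(x)=\exc$. By legality of $\lambda_0$, these values already satisfy \eqr{E4} at every such node, whatever is chosen on $U$: an $\inc$-node has all attackers $\exc$, so the right-hand side of \eqr{E4} is a (possibly empty) product of $1$'s, while an $\exc$-node has some $\inc$ attacker, so the right-hand side contains a factor $1-1=0$. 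Moreover no node of $U$ is attacked by an $\inc$-node, so at $x\in U$ the equation \eqr{E4} collapses to $f_0(x)=\prod_{y\in Att(x)\cap U}\bigl(1-f_0(y)\bigr)$. Thus the whole problem reduces to: \emph{solve this restricted system on $\tuple{U,R_U}$ with every value strictly in $(0,1)$}; projecting such a solution by Definition~\ref{def:CP-GR-translation} then returns $\lambda_0$ exactly.

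Two features of $\tuple{U,R_U}$ will drive the argument. (i) Every node of $U$ has at least one attacker in $U$ --- this is just the legality condition for the label $\und$ (no $\inc$ attacker, at least one $\und$ attacker) applied inside $\tuple{S,R}$. (ii) $\tuple{U,R_U}$ has no non-empty admissible set, and here is where ``preferred'' is used. Indeed, if $V\subseteq U$ were non-empty and admissible in $\tuple{U,R_U}$, then $E_0\cup V$ would be admissible in $\tuple{S,R}$: it is conflict-free because $E_0$ and $V$ are each conflict-free and no $\inc$-node attacks a $U$-node and no $U$-node attacks an $\inc$-node; and it defends itself because an attacker of $E_0$ is countered by $E_0$, an $\exc$ attacker of a node of $V$ has an $\inc$ attacker (hence is attacked by $E_0$), and a $\und$ attacker of a node of $V$ lies in $U$ and is therefore attacked by $V$ inside $\tuple{U,R_U}$. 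Since $V\neq\varnothing$ and $V\cap E_0=\varnothing$, this admissible set strictly contains $E_0$, contradicting that $E_0$ is a maximal admissible set, i.e. preferred.

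Next I would solve the restricted system by a fixed-point argument. Define $T\colon[0,1]^U\to[0,1]^U$ by $T(g)(x)=\prod_{y\in Att(x)\cap U}(1-g(y))$; it is continuous and $[0,1]^U$ is compact and convex, so Brouwer's theorem gives a fixed point $g^*$. I then claim $g^*\in(0,1)^U$. If not, some coordinate is $0$ or $1$; a coordinate equal to $0$ forces some factor $1-g^*(y)$ to vanish (the index set being non-empty by (i)), hence some coordinate equal to $1$, so in either case $V:=\{x\in U\mid g^*(x)=1\}\neq\varnothing$. Now for $x\in V$ the product $\prod_{y\in Att(x)\cap U}(1-g^*(y))$ equals $1$ over a non-empty index set, so every factor is $1$, i.e. $Att(x)\cap U\subseteq W:=\{x\in U\mid g^*(x)=0\}$; and for $x\in W$ the product is $0$, so some attacker of $x$ lies in $V$. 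These two statements say precisely that $V$ is conflict-free in $\tuple{U,R_U}$ and attacks each of its attackers, so $V$ is a non-empty admissible set of $\tuple{U,R_U}$, contradicting (ii). Hence $g^*\in(0,1)^U$, and extending it by the layer-one values gives an $f_0$ solving \eqr{E4} with $\lambda(f_0)=\lambda_0$.

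\emph{Where the difficulty lies.} The routine parts are the case checks that the boundary values satisfy \eqr{E4} and the continuity of $T$ (a finite product of affine maps); the empty-attacker case (value forced to $1$, label $\inc$) is consistent throughout. The one genuinely delicate step is property (ii): I must state carefully why preferredness of the \emph{labelling} $\lambda_0$ kills all non-trivial admissible sets of the undecided sub-network, and verify cleanly that $E_0\cup V$ is admissible in the full network --- this is the usual interplay between complete labellings and the admissible sets of the framework induced on their undecided arguments, and it is the hinge on which ``preferred'' (rather than merely ``complete'') is needed, since a complete labelling whose undecided part carries a non-trivial admissible set would instead yield a boundary solution of \eqr{E4} matching a different, larger extension.
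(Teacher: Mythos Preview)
Your argument is correct and complete. The two-layer construction is sound: the verification that boundary values satisfy \eqr{E4} is routine, the reduction to the sub-network $\tuple{U,R_U}$ is accurate (out-labelled attackers contribute factors of $1$, in-labelled attackers cannot exist), and the Brouwer argument together with the admissibility contradiction correctly pins the fixed point to the open cube. Your proof of property (ii) --- that the undecided sub-network of a preferred labelling admits no non-empty admissible set --- is the genuine crux, and you handle it cleanly; the case analysis for conflict-freeness and defence of $E_0\cup V$ in $\tuple{S,R}$ is correct.

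As for comparison: the present paper does not actually prove Theorem~\ref{532-f2}. It is quoted from \cite{5} (``We quote the following from \cite{5}'') and stated without proof. The paper's own machinery that comes closest is Construction~\ref{532-C23} and Example~\ref{532-f4}, which address the \emph{non}-preferred case by adjoining an auxiliary self-attacking node $u$ to force undecided values, and then appealing to the preferred case as a black box. So your proof is not a comparison target against anything in this paper; rather, it supplies what the paper omits. Your route via Brouwer and the ``no non-trivial admissible set in $\tuple{U,R_U}$'' lemma is self-contained and arguably more transparent than the auxiliary-node device the paper uses elsewhere, since it isolates exactly where preferredness (as opposed to mere completeness) enters: it is precisely what rules out boundary fixed points on the undecided part.
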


\begin{figure}[htb]
\centering

\input{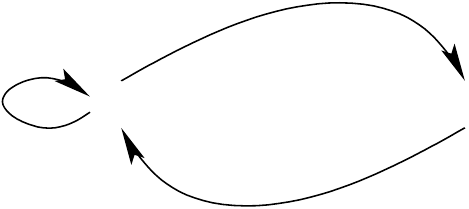_t}

\caption{A sample argumentation network having a complete extension that cannot be
found via Equations~\eqr{E4}.\label{532-F3}}
\end{figure}

\begin{remark}\label{532-f3}
There are (complete) extensions $\lambda'$ such that there does not exist an $f'$
with $\lambda'=\lambda(f')$.

For example, in Figure \ref{532-F3}, the extension $a=b=\und$ cannot
be obtained by any $f$. Only $b=\inc$, $a=\exc$ can be obtained as a
solution to equations \eqr{E4}.\footnote{The equations are
\begin{enumerate}
\item $a=(1-a) \times (1-b)$
\item $b=1-a$.
\end{enumerate}
From the above two equations we get
\begin{enumerate}
\setcounter{enumi}{2}
\item $a=(1-a)\times a$
\end{enumerate}
The only possibility is $a=0$.}
\end{remark}

\begin{example}\label{532-f4}
Let \tuple{S,R} be given and let $\lambda$ be a complete extension which is
not preferred! The reason that $\lambda$ is not preferred, is that we have by 
definition, a $\lambda_1$ extending $\lambda$, which gives more $\{\inc,\exc\}$ 
values to points $z$, for which $\lambda$ gives the value \und. Therefore, we can 
prevent the existence of such an extension $\lambda_1$, if we force such points $z$ to 
be undecided. This we do by attacking such points $z$ by a new self-attacking point $u$.
The construction is therefore as follows. We are given \tuple{S,R} and a complete 
extension $\lambda$, which is not preferred. We now construct a new \tuple{S',R'} 
which is dependent on $\lambda$. Consider \tuple{S',R'} where $S' = S\cup \{u\}$, 
where $u \not\in S$, is a new point.  Let $R'$ be
\[
R'=R\cup \{(u,u)\}\cup \{(u,v)\; | \; v \in S \ws{and} \lambda (v) =\und\}.
\]

Then $\lambda' =\lambda\, \cup\, \{(u, \und)\}$ is a preferred extension
of \tuple{S', R'} and can therefore be obtained from a function $f'$ using
the equations~\eqr{E4}.

Let us see what the construction above does to our example in Figure
\ref{532-F3}, and let us look at the extension $\lambda (a) =\lambda
(b) =\und$.

\begin{figure}
\centering

\input{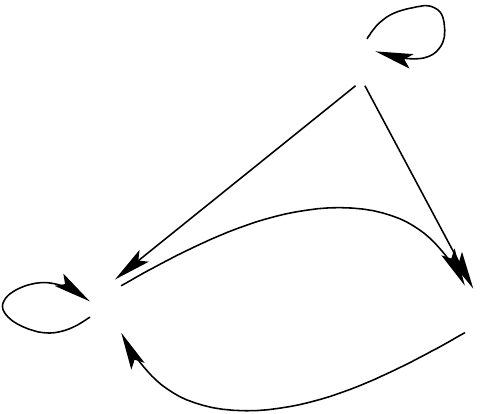_t}
\caption{The network of Figure~\ref{532-F3} with an extra undecided node $u$
attacking all nodes.\label{532-F4}}
\end{figure}

Consider the network in Figure \ref{532-F4}. Its equations~\eqr{E4} are:
\begin{enumerate}
\item $u =1-u$
\item $a =(1-u) (1-a) (1-b)$
\item $b = (1-u)(1-a)$
\end{enumerate}
From (1) we get $u =\half$. So we have:

\begin{enumerate}
\setcounter{enumi}{1}
\item $a=\half (1-a) (1-b)$
\item $b=\half (1-a)$
\end{enumerate}

\[\begin{array}{lcl}
\renewcommand{\arraystretch}{1} 1-b &=& 1-\half (1-a)\\[1ex]
&=& \frac{2-1+a}{2} \\[1ex]
&= & \frac{1+a}{2}
\end{array}
\]
therefore substituting in (1) we get

\[\begin{array}{rcl}
a &=& \half (1-a) (\frac{1+a}{2})\\[1.2ex]
  &=& \frac{1}{4} (1-a^2)\\[1.2ex]
  & &4a+a^2-1=0\\[1.2ex]
  & & (a+2)^2-4-1=0\\[1.2ex]
  & & (a+2)^2=5\\[1.2ex]
 a&=& \sqrt{5}-2\approx0.236\\[1.2ex]
 b&=& \half (1-a)\\[1.2ex]
  &=&\half (1-\sqrt{5} +2)\\[1.2ex]
  &=&\frac{3-\sqrt{5}}{2} \approx 0.382.
\end{array}
\]
The extension of the network is $a=b=\und$.
\end{example}

\paragraph{Summary of the results  so far for the syntactical probabilistic 
method.}

Given an argumentation network \tuple{S,R}, we can find all Method 1
complete probabilistic extensions for it by solving all \eqinv\ 
equations.  Such complete probabilistic extensions will also be complete 
extensions in the traditional sense (i.e., Dung's), which will also include 
all preferred extensions (Theorems \ref{532-f1} and \ref{532-f2}).%
\footnote{Note that in traditional Dung semantics a preferred
extension $E$ is maximal in the sense that there is no extension $E'$ 
such that
\begin{enumerate}
\item If $x$ is considered \inc\ (resp. \exc) by $E$ then $x$ is also 
considered \inc\ (resp. \exc) by $E'$.
\item There exists at least one node considered \inc\ (resp. \exc) by $E'$ 
and considered \und\ by $E$.
\end{enumerate}
The above definition holds for numerical or probabilistic semantics,
where the value $1$ (resp. $0$) is understood as \inc\ (resp. \exc) and
values in $(0,1)$ are understood as \und.}

However, not all complete extensions can be obtained in this
manner (i.e., by Method 1, see remark \ref{532-f3} and compare with
Example \ref{532-E12}).

We can, nevertheless, for any complete extension $E$ which cannot be
obtained by Method 1, obtain it from the solutions of the equations generated
for a larger network \tuple{S',R'} as shown in Example \ref{532-f4}.

We shall say more about this in a later section.

\begin{remark}
Evaluation of the results so far for the syntactical probabilistic method.

\begin{enumerate}
\item \label{it:rem-1} We discovered a formal mathematical connection between 
  the syntactical probabilistic approach (Method 1) and the Equational
  \eqinv\ approach. Is this just a formal similarity or is there also
  a conceptual connection?

The traditional view of an abstract argumentation frame \tuple{S,R}, is
that the arguments are abstract, some of them abstractly attack each
other. We do not know the reason, but we seek complete extensions of
arguments that can co-exist (i.e., being attack-free), and that
protect themselves. The equational approach is an equational way of
finding such extensions. Each solution \Bf\ to the equations give rise to
a complete extension. The numbers we get from such solutions \Bf\ of
the equational approach can be interpreted as giving the degree of being
in the complete extension (associated with \Bf) or being out of it. 

Due to the mathematical similarity with the probability approach, these 
numbers are now interpreted as probabilities.  

To what extent is this justified? Can we do this at all?  

Let us recall the syntactical probabilistic method.
We start with an abstract argumentation framework \tuple{S,R} and 
add the probability $P(x)$ for each $x \in S$. We can interpret $P(x)$ as 
the probability that $x$ ``is a player'' to be considered (this is a vague 
statement which could mean anything but is sufficient for our purpose). The 
problem is how do we take into account the attack relation? Our choice was to 
require equation \eqr{E3}.  It is this choice that allowed the connection between
the syntactical probabilistic approach and the Equational
approach with \eqinv.  

So our syntactical probabilistic approach should work as
follows.  

Let $P$ be the independent probability on each $x\in S$. This is
an arbitrary number in $[0,1]$. Such a $P$ cannot be used for calculating
extensions because it does not take into consideration the attack
relation $R$. So modify $P$ to a $P'$ which does respect $R$ via  
Equation \eqr{E3}.  

How do we modify $P$ to find $P'$?  

Well, we can use a numerical iteration method. The details are not important 
here, the importance is in the idea, which can be applied to the traditional 
notion of extensions as well. Given \tuple{S,R} and an arbitrary desired assignment 
$E$ of elements that are \inc\ (and consequently also determining elements that are 
\exc) for $S$, this $E$ may not be legitimate in taking into account $R$, so we 
need to modify it to get the best proper extension $E'$ nearest to $E$ (cf.
\cite{Caminada-Pigozzi:11,gabbay-rodrigues:COMMA2014}).

So our syntactical probabilistic approach yielding a $P$ satisfying
Equation \eqr{E3} can be interpreted as \eqinv\ extensions obtained from
initial values which are probabilities (as opposed to, say, initial
values being a result of voting) corrected via iteration procedures
using $R$.  

Alternatively, we can look at the \eqinv\ equations as a
mathematical means of finding all those syntactical probabilities $P$
which respect the attack relation $R$ (via Equation~\eqr{E3}). 

Or we can see the solutions of the \eqinv\ as giving probabilities for
being included or excluded in the complete extension defined by these solutions 
(as opposed to the interpretation of the degree of being \inc\ or \exc).

\item The discussion in item~\ref{it:rem-1}. above hinged upon the choice 
  we made to take account of $R$ by respecting Equation~\eqr{E3}. There are other
  alternatives for taking $R$ into account. We can give direct, well-motivated 
  definitions of how to propagate probabilities along attack
  arrows. This is similar to the well-known problem of how to
  propagate probabilities along proofs (provability support arrows,
  or modus ponens, etc). Such an analysis is required anyway for
  instantiated networks, for example in ASPIC+ style \cite{Modgil-Prakken:14}). 
  We shall deal with this in a subsequent paper.
\end{enumerate}
\end{remark}

\section{The semantical approach (Method 2)}

Let us now check what can be obtained if we use Method 2, i.e., giving
probability to the models of the language. In this case the equation
(for $\{y_i\} = Att(x)$) \eqr{E3} $P(x) = P(\bigwedge_i\neg y_i)$ still
holds, but the $\neg y_i$ are not independent. So we cannot write
equation \eqr{E3*} for them and get \eqinv. Instead we need to use
the schema $P(A\vee B) = P(A) +P(B) -P(A\wedge B)$.  We begin with a
key lemma, which will enable us to compare later with the work of
M. Thimm, see \cite{thimm:12}.

\begin{lemma}\label{532-L5}
Let \tuple{S,R} be a network and let $P$ be a probability measure on
the space $W$ of all models of the language whose set of atoms is $S$. 
For $x\in S$, let the following hold
\[
P(x) = P(\bigwedge^n_{i=1} \neg y_i)
\]
where $Att(x)=\{y_1,\ldots,y_n\}$.

Then we have
\begin{enumerate}
\item $P(x) \leq P(\neg y_i), 1 \leq i \leq n$
\item $P(x) \geq 1 -\Sigma^n_{i=1} P(y_i)$
\end{enumerate}
\end{lemma}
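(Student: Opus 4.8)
The plan is to treat both inequalities as direct consequences of the basic axioms of a probability measure on the space $W$ of propositional models, once we translate logical relationships between formulas into set-theoretic relationships between their model sets. Throughout, write $[\varphi] = \{\vare \in W \mid \vare \Vdash \varphi\}$ for the set of models of a wff $\varphi$, so that $P(\varphi) = P([\varphi])$ in the notation of \eqr{P1}; the two facts to be used are monotonicity ($A \subseteq B \Rightarrow P(A) \le P(B)$) and finite subadditivity ($P(\bigcup_i A_i) \le \sum_i P(A_i)$), both of which hold for any probability measure.

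For part 1, I would observe that for each fixed $i$ the formula $\bigwedge_{j=1}^n \neg y_j$ classically entails $\neg y_i$, hence $[\bigwedge_{j=1}^n \neg y_j] \subseteq [\neg y_i]$. Applying monotonicity gives $P(\bigwedge_{j=1}^n \neg y_j) \le P(\neg y_i)$, and since the hypothesis of the lemma states $P(x) = P(\bigwedge_{j=1}^n \neg y_j)$, we conclude $P(x) \le P(\neg y_i)$ for every $1 \le i \le n$.

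For part 2, I would use De Morgan: $\neg \bigwedge_{i=1}^n \neg y_i$ is classically equivalent to $\bigvee_{i=1}^n y_i$, so $[\bigwedge_{i=1}^n \neg y_i]$ is the complement in $W$ of $\bigcup_{i=1}^n [y_i]$. Therefore $P(\bigwedge_{i=1}^n \neg y_i) = 1 - P(\bigcup_{i=1}^n [y_i])$. Finite subadditivity gives $P(\bigcup_{i=1}^n [y_i]) \le \sum_{i=1}^n P(y_i)$, whence $P(\bigwedge_{i=1}^n \neg y_i) \ge 1 - \sum_{i=1}^n P(y_i)$, and again substituting $P(x)$ for the left-hand side finishes the argument.

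There is no real obstacle here: the lemma is essentially monotonicity plus Boole's inequality dressed in the clothing of the translation $\Delta_{\tuple{S,R}}$. The only point requiring minor care is the passage between syntactic entailment/equivalence and inclusion/complementation of model sets, which is routine for classical propositional logic; one should also note the argument does not use the full strength of the equation $P(x) = P(\bigwedge_i \neg y_i)$ beyond the single equality, so no assumptions about independence of the $y_i$ are needed (in contrast to Method 1).
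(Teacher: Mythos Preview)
Your proof is correct. Both inequalities follow immediately from monotonicity and Boole's inequality (finite subadditivity), exactly as you say, and the passage from entailment to inclusion of model sets is unproblematic.

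The paper takes a slightly different route: it proves both items simultaneously by induction on $n$. The base case $n=1$ is trivial, and the inductive step writes $\bigwedge_{i=1}^{m+1}\neg y_i$ as $\neg z \wedge \neg y$ with $z=\bigvee_{i=1}^m y_i$ and $y=y_{m+1}$, then expands $P(\neg z \wedge \neg y) = 1 - P(y) - P(z) + P(y\wedge z)$ via two-event inclusion--exclusion and combines this with the inductive hypothesis. In effect the paper is re-deriving Boole's inequality inside the induction rather than invoking it directly. Your argument is shorter and more conceptual, isolating the two probabilistic facts (monotonicity for part~1, the union bound for part~2) that are really doing the work; the paper's version is more self-contained in that it does not name these facts but computes everything from $P(A\vee B)=P(A)+P(B)-P(A\wedge B)$.
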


\begin{proof}
By induction on $n$.
\begin{enumerate}
\item If $x =\neg y$ then $P(x) =1-P(y)$ and the above holds.
\item Assume the above holds for $m$, show for $m+1$. Let $z
  =\bigvee^m_{i=1} y_i, y=y_{m+1}$. Then $x=\neg z \wedge\neg y$.

We have by the induction hypothesis
\begin{itemize}
\item $P(\neg z) \leq P(\neg y_i), i=1\comma m$
\item $P(\neg z) \geq 1-\Sigma^m_{i=1} P(y_i)$
\end{itemize}
Consider now:
\[\begin{array}{lcl}
P(\neg z \wedge\neg y) &=& 1-P(y\vee z)\\[1ex]
 &=& 1-(P(y) +P(z) -P(y\wedge
z))\\[1ex]
 &=& 1-P(y)-P(z) +P(y\wedge z)\\[1ex]
 &=& 1-P(y)-(P(z)-P(y\wedge z))
\end{array}
\]
But $P(A\wedge B)\leq P(B)$ is always true.

So
\[
P(\neg z\wedge\neg y) \leq 1-P(y) =P(\neg y)
\]

On the other hand, by our assumption
\[
1-P(z) =P(\neg z) \geq 1-\Sigma^m_{i=1} P(y_i)
\]
So
\[\begin{array}{lcl}
P(\neg z \wedge\neg y) &=& 1-P(y) -P(z) +P(y\wedge z)\\[1ex]
 && (1-P(z))
-P(y) +P(y\wedge z)\\[1ex]
 &\geq & 1-\Sigma P(y_i) -P(y) +P(y\wedge
z)\\[1ex]
 &\geq & 1-\Sigma^{m+1}_{i=1} P(y_i)
\end{array}
\]
\end{enumerate}
\end{proof}

\begin{remark}\label{532-R5}
The converse of Lemma \ref{532-L5} does not hold, as we shall see in
Example \ref{532-E10} below.
\end{remark}

Let us look at some examples illustrating the use of Method 2.

\begin{example}\label{532-E6}
Consider the network in Figure \ref{532-F7}.  This figure is taken
from Thimm's ``A probabilistic semantics for abstract argumentation'' 
\cite[Figure 1]{thimm:12}.  We include it here for two reasons:
\begin{enumerate}
\item To illustrate or probabilistic semantic approach.
\item To use it later to compare our work with Thimm's approach.
\end{enumerate}

\begin{figure}
\centering \input{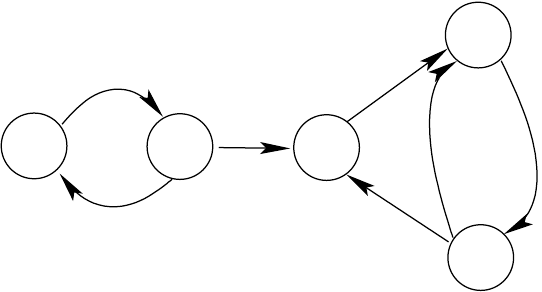_t}
\caption{Figure 1 of ``A probabilistic semantics for abstract argumentation''
\cite{thimm:12}.}\label{532-F7}
\end{figure}

Let us apply Method 2 to it and assign probabilities to the models of
the propositional language with the atoms $\{a_1, a_2, a_3, a_4,
a_5\}$.  We assign $P$ as follows.
\[\begin{array}{l}
 P(a_1\wedge \neg a_2\wedge a_3\wedge\neg a_4\wedge a_5)=0.3\\[1ex]
 P(a_1\wedge\neg a_2\wedge\neg a_3\wedge a_4\wedge\neg a_5)=0.45\\[1ex]
 P(\neg a_1\wedge a_2\wedge\neg a_3\wedge \neg a_4\wedge a_5)=0.1\\[1ex]
 P(\neg a_1\wedge a_2\wedge\neg a_3\wedge a_4\wedge\neg a_5)=0.15\\[1ex]
 P(\w{any other conjunctive model})=0.
\end{array}
\]
  Let us compute $P(a_i)$, for $i=1\comma 5$.
  
  We have
  \[
  P(X) =\sum_{\vare \Vdash X} P (\vare).
  \]
  We get
  \[\begin{array}{l}
  P(a_1) = 0.3 + 0.45 = 0.75\\[1ex]
 P(a_2) = 0.1 + 0.15 = 0.25\\[1ex]
 P(a_3) =
  0.3\\[1ex]
 P(a_4) = 0.45 + 0.15 = 0.6\\[1ex]
 P(a_5) = 0.3 + 0.1 = 0.4.
  \end{array}
  \]
  
  To be a legitimate probabilistic model $P$ must satisfy equation
  \eqr{E3} relating to the attack relation of Figure \ref{532-F7}.  Namely
  we must have
  \[
  P(X) = P(\bigwedge_{Y\in Att(X)} \neg Y)  \eqno \eqr{E3}
  \]
  
  Therefore
\[\begin{array}{l}
P(a_1)=P(\neg a_2)\\[1ex]
P(a_2)=P(\neg a_1)\\[1ex]
P(a_3) = P(\neg a_2\wedge\neg a_5)\\[1ex]
P(a_4) = P(\neg a_3\wedge\neg a_5)\\[1ex]
P(a_5) = P(\neg a_4)
\end{array}
\]
  Let us calculate the $P$ in the right hand side of the above
  equations.
\[\begin{array}{l}
P(\neg a_2) = 1-0.25 = 0.75\\[1ex]
P(\neg a_1) = 1-0.75 = 0.25\\[1ex]
P(\neg a_2 \wedge\neg a_5) = 0.45\\[1ex]
P(\neg a_3 \wedge\neg a_5) = 0.45 + 0.15 = 0.6\\[1ex]
P(\neg a_4) = 0.4
\end{array}
\]
  
  We see that
  \[
  P(a_3) = 0.3 \neq P(\neg a_2 \wedge \neg a_5) = 0.45.
  \]
  
  Therefore this distribution $P$ is not legitimate according to our
  Method 2.  It does not satisfy equations \eqr{E3} because
  \[
  P(a_3)\neq P(\neg a_2\wedge \neg a_5)
  \]
  
  Therefore Lemma \ref{532-L5} does not apply and indeed, condition
  (2) of Lemma \ref{532-L5} does not hold for $a_3$.  We have $P(a_3)
  = 0.3$ but $1-P(a_2)-P(a_5)= 0.35$.
\end{example}

\begin{example}\label{532-E8}
Let us look at Figure \ref{532-F9}.  This is also taken from Thimm's
paper \cite[Figure 2]{thimm:12}.  It shall be used later to compare our
methods with Thimm's.

\begin{figure}
\centering \input{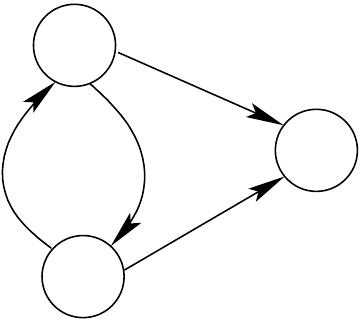_t}
\caption{Figure 2 of ``A probabilistic semantics for abstract argumentation'' 
\cite{thimm:12}.}\label{532-F9}
\end{figure}

\paragraph{1.  We use Method 2.}
Consider the following probability distribution on models

\[\begin{array}{l}
P(a_1\wedge\neg a_2\wedge\neg a_3)=0.5\\[1ex]
 P(a_1\wedge\neg a_2\wedge
a_3)=0\\[1ex]
 P(a_1\wedge a_2\wedge\neg a_3)=0\\[1ex]
 P(a_1\wedge a_2\wedge a_3)
=0\\[1ex]
 P(\neg a_1\wedge a_2\wedge a_3)=0\\[1ex]
 P(\neg a_1\wedge
a_2\wedge\neg a_3)=0.5\\[1ex]
 P(\neg a_1\wedge\neg a_2\wedge
a_3)=0\\[1ex]
 P(\neg a_1\wedge \neg a_2\wedge\neg a_3)=0.
\end{array}
\]

In this model we get
\[\begin{array}{l}
P(a_1)=0.5\\[1ex]
 P(a_2) = 0.5\\[1ex]
 P(a_3) =0
\end{array}
\]
Let us check whether this probability distribution satisfies equation
\eqr{E3}, namely
\[
P(X) = P(\bigwedge_{Y\in Att(X)}\neg Y)  \eqno \eqr{E3}
\]

We need to have
\[\begin{array}{lcl}
P(a_1) &=& P(\neg a_2)\\[1ex]
 P(a_2) &=& P(\neg a_1)\\[1ex]
 P(a_3) &=& P(\neg
a_2\wedge \neg a_2)
\end{array}
\]
Indeed
\[\begin{array}{l}
P(\neg a_1) =1-P(a_1)=0.5\\[1ex]
 P(\neg a_2) =1-P(a_2) =0.5\\[1ex]
 P(\neg a_1\wedge \neg a_2)=0.
\end{array}
\]

Thus we have a legitimate model.

\paragraph{2.  We use Method 1.}
Let us use \eqinv\ on this figure, namely we try and solve the
equations
\[\begin{array}{l}
a_1=1-a_2\\[1ex]
 
a_2=1-a_1\\[1ex] 
a_3=(1-a_1)(1-a_2)
\end{array}
\]
Let us use a parameter $0 \leq x \leq 1$ and let
\[\begin{array}{l}
a_1=x,\\[1ex] a_2=1-x,\\[1ex] a_3=x(1-x)
\end{array}
\]
The probabilities we get with parameter $x$ as well as for $x=0.5$
are given below.

\[\begin{array}{lllll}
 P(a_1\wedge a_2\wedge a_3)& = & x^2(1-x)^2 & = & \frac{1}{16}\\[1ex]
 P(a_1\wedge a_2\wedge\neg a_3)& = & x(1-x)(1-x(1-x)) & = & \frac{3}{16} \\[1ex]
 P(a_1\wedge\neg a_2\wedge a_3) & = &  x^3(1-x) & = & \frac{1}{16}\\[1ex]
 P(a_1\wedge \neg a_2\wedge\neg a_3) & = & x^2(1-x(1-x)) & = & \frac{3}{16}\\[1ex]
 P(\neg a_1\wedge a_2\wedge a_3) & = & x(1-x)^3 & = & \frac{1}{16}\\[1ex]
 P(\neg a_1\wedge a_2\wedge\neg a_3) & = & (1-x)^2(1-x(1-x)) & = & \frac{3}{16}\\[1ex]
 P(\neg a_1\wedge\neg a_2\wedge a_3)& = & x^2(1-x)^2 & = & \frac{1}{16}\\[1ex]
 P(\neg a_1\wedge\neg a_2\wedge\neg a_3)& = & x(1-x)(1-x(1-x)) & = & \frac{3}{16}\\[1ex]
\end{array}
\]

If we choose $x=0.5$ we get $P(a_1)=P(a_2) =0.5$ and $P(a_3)=\frac{1}{4}$.
\end{example}

\begin{example}\label{532-E10}
This example shows that the converse of Lemma \ref{532-L5} does not
hold. Consider the network in Figure \ref{532-F11}.

\begin{figure}
\centering \input{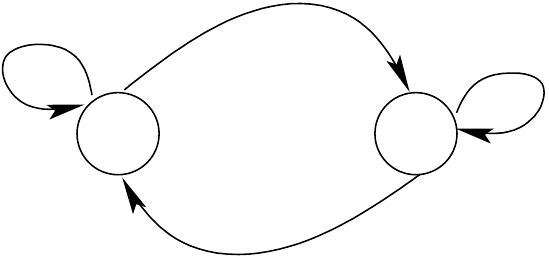_t}
\caption{Network for Example~\ref{532-E10}.}\label{532-F11}
\end{figure}

Any legitimate probability assigned to models would be required to
satisfy the following
\[\begin{array}{l}
P(a)=P(\neg a\wedge\neg b)\\[1ex]
 P(b)=P(\neg a\wedge\neg b)
\end{array}\]

\paragraph{Case 1.}  Try the following probability $P_1$.

\[P_1(a\wedge b) = P_1(a\wedge \neg b) = P_1(\neg a\wedge b)=P_1(\neg a\wedge\neg b) = 0.25.
\]
Therefore
\[\begin{array}{l}
P_1(a) = 0.5\\[1ex]
P_1(b)=0.5
\end{array}
\]
Note that we also have
\[\begin{array}{l}
P_1(a) =\half \leq 1-P_1(b) =\half\\[1ex]
 P_1(a) =\half \leq 1-P_1(a)=\half.
\end{array}
\]
Similarly for $P_1(b)$ by symmetry.

Also
\[
P_1(a) =\half \geq 1-P_1(a)-P_1(b) =1-\half-\half=0.
\]
Thus the conditions of the conclusions of Lemma \ref{532-L5} hold.
However the assumptions of Lemma \ref{532-L5} do not hold, because
\[
P_1(a) =\half \neq P_1(\neg a\wedge\neg b)=\frac{1}{4}.
\]

\paragraph{Case 2.}  Let us check whether we can find a probability $P_2$ which is indeed acceptable to Method 2.  Let us try with variables $y,z$ and create equations and solve them:
\[\begin{array}{l}
P_2(a\wedge b)=y\\[1ex]
P_2(\neg a\wedge b)=z.
\end{array}
\]
Therefore $P_2(b) =y+z$.
\[
P_2(\neg a\wedge\neg b) = y+z
\]
and what is left is
\[
P_2(a\wedge\neg b) = 1-2y-2z
\]
but we must also have
\[P_2(a) =P_2(\neg a\wedge\neg b)
\]
and hence we must have
\[P_2(a) =1-2y-2z+y=P_2(\neg a\wedge\neg b)=y+z.
\]
So we get the equation
\[\begin{array}{l}
1-2y-3z=0\\[1ex]
 2y+3z=1\\[1ex]
 y=\frac{(1-3z)}{2}
\end{array}
\]
Since $0 \leq y, z \leq 1$ so $z$ must be less than $\frac{1}{3}$.

Let us choose $z=0.2$ and so $y=0.2$.

We get, for example
\[\begin{array}{l}
P_2(a\wedge b) = 0.2\\[1ex]
P_2(\neg a\wedge b) = 0.2\\[1ex]
P_2(\neg a\wedge\neg b) = 0.4\\[1ex]
P_2(a\wedge\neg b) = 0.2
\end{array}
\]
We could also have chosen $z=\frac{1}{3}$ and $y=0$. This would give
$P_3$, where
\[\begin{array}{l}
P_3(a\wedge b)=0\\[1ex]
P_3(\neg a\wedge b)=\frac{1}{3}\\[1ex]
P_3(\neg a\wedge\neg b) =\frac{1}{3}\\[1ex]
P_3(a\wedge\neg b)=\frac{1}{3}
\end{array}
\]
So we get
\[\begin{array}{l}
P_3(b)=P(a)=\frac{1}{3}\\[1ex]
P_3(\neg a\wedge\neg b)=\frac{1}{3}.
\end{array}\]
\end{example}

\begin{example}\label{532-E12}
Consider the network of Figure \ref{532-F3}.  Let us try to find a
probabilistic semantics for it according to Method 2.  Assume we have
\[\begin{array}{l}
P(a\wedge b) = x_1\\[1ex]
P(a\wedge\neg b) = x_2\\[1ex]
P(\neg a \wedge b) = x_3\\[1ex]
P(\neg a\wedge\neg b) = 1-x_1-x_2-x_3.
\end{array}
\]
We need to satisfy
\[\begin{array}{l}
P(a) = P(\neg a\wedge\neg b)\\[1ex]
 P(b) = P(\neg a)
\end{array}
\]
This means we need to solve the following equations.
\begin{enumerate}
\item $x_1+x_2 = 1-x_1-x_2-x_3$
\item $x_1+x_3=1-x_1-x_2$.
\end{enumerate}
By adding $x_1+x_2$ to both sides (1) can be written as
\[
2(x_1+x_2) =1-x_3,
\]
and by swapping $x_3$ to the right and $-x_1 -x_2$ to the left (2) can
be written as
\[
2x_1 +x_2 =1-x_3.
\]

Thus we get
\begin{enumerate}
\setcounter{enumi}{2}
\item $2x_1+x_2=2x_1+2x_2$.
\end{enumerate}
Therefore $x_2=0$.

There remains, therefore
\begin{enumerate}
\setcounter{enumi}{3}
\item $2x_1=1-x_3$.
\end{enumerate}
We can choose values for $x_3$.

\paragraph{Sample choice 1.}  $x_3=1$, so $x_1=0$.

We get $P_1(a\wedge b) = P(a\wedge\neg b) = P_1(\neg a\wedge\neg b) =0$ and
$P_1(\neg a\wedge b) =1$.

This yields $P(a) =0, P(b) =1$. This is also the \eqinv\ 
solution to
\[\begin{array}{l}
b = 1-a\\[1ex]
 a=(1-a)(1-b)
\end{array}
\]

\paragraph{Sample choice 2.}  $x_3=\half$.  So $x_1=\frac{1}{4}$ 
and the probabilities are
\[\begin{array}{l}
P_2(a\wedge b) =\frac{1}{4}\\[1ex]
P_2(a\wedge\neg b) =0\\[1ex]
P_2(\neg a\wedge b) =\half\\[1ex]
 P_2(\neg a\wedge\neg b) =\frac{1}{4}.
\end{array}
\]

$P_2$ is a Method 2 probability, which cannot be given by Method 1.

\paragraph{Sample choice 3.} $x_3=0$.  Then $x_1=\half$. We get 
\[\begin{array}{l}
P_3(a\wedge b)=\half\\[1ex]
 P_3(a\wedge\neg b) =0\\[1ex]
 P_3(\neg a\wedge b)=0\\[1ex]
 P_3(\neg a\wedge\neg b) =\half.
\end{array}
\]
Therefore $P_3(a)=P_3(b)=\half$.
\end{example}

\begin{lemma}\label{532-L13}
Let \tuple{S,R} be a network and let $P$ be a semantic probability
(Method 2) for \tuple{S,R}.  Let $x \in S$ and let $\{y_i\} = Att(x)$.
Then
\begin{enumerate}
\item If for some $y_i, P(y_i)=1$ then $P(x)=0$.
\item If for all $y_i, P(y_i)=0$ then $P(x)=1$.
\end{enumerate}
\end{lemma}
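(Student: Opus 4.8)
The plan is to read off both statements directly from Lemma~\ref{532-L5}. The point is that ``$P$ is a semantic (Method~2) probability for $\langle S,R\rangle$'' means exactly that equation~\eqr{E3} holds, i.e.\ $P(x) = P(\bigwedge_{i=1}^n \neg y_i)$ with $Att(x) = \{y_1,\ldots,y_n\}$, and that is precisely the standing hypothesis of Lemma~\ref{532-L5}. Hence I may use both of its conclusions,
\[
P(x) \;\leq\; P(\neg y_i) \;=\; 1 - P(y_i) \quad (1 \leq i \leq n), \qquad P(x) \;\geq\; 1 - \textstyle\sum_{i=1}^n P(y_i).
\]

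For item~1 I would argue as follows: if $P(y_k) = 1$ for some $k$, the first inequality gives $P(x) \leq 1 - P(y_k) = 0$, and non-negativity of $P$ forces $P(x) = 0$. For item~2: if $P(y_i) = 0$ for every $i$, the second inequality gives $P(x) \geq 1 - \sum_i P(y_i) = 1$, and $P(x) \leq 1$ forces $P(x) = 1$. The only loose end is the degenerate case $Att(x) = \varnothing$, which the induction in Lemma~\ref{532-L5} (based at $n=1$) does not literally cover: there item~1 is vacuous, and for item~2 equation~\eqr{E3} reads $P(x) = P(\top) = 1$ outright.

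I do not expect a genuine obstacle here --- the lemma is a corollary of Lemma~\ref{532-L5}, so the ``hard part'' is merely keeping straight the directions of the two inequalities. Should a self-contained proof be preferred, item~1 follows from monotonicity of $P$ (since $\bigwedge_j \neg y_j \vdash \neg y_k$, hence $P(\bigwedge_j \neg y_j) \leq P(\neg y_k) = 1 - P(y_k) = 0$) and item~2 from finite subadditivity applied to $\bigwedge_j \neg y_j = \neg\bigvee_j y_j$ (so $P(\bigwedge_j \neg y_j) = 1 - P(\bigvee_j y_j) \geq 1 - \sum_j P(y_j) = 1$); this version invokes nothing beyond the Kolmogorov axioms for $P$.
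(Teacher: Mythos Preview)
Your argument is correct, and it differs from the paper's own proof. The paper does \emph{not} invoke Lemma~\ref{532-L5}; instead it works directly from equation~\eqr{P1}, expanding $P(x)=P(\bigwedge_i\neg y_i)$ as a sum over models. For item~1 it observes that $P(y_1)=1$ forces $\sum_{\vare\Vdash\neg y_1}P(\vare)=0$, hence every summand contributing to $P(x)$ vanishes; for item~2 it writes $P(x)=1-P(\bigvee_i y_i)$ and argues that any model $\vare'$ with $\vare'\Vdash\bigvee_i y_i$ and $P(\vare')>0$ would witness $P(y_i)>0$ for some~$i$, contradicting the hypothesis.

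Your route via Lemma~\ref{532-L5} is shorter and cleaner: both items fall out of the two inequalities with no further bookkeeping, and you correctly note the $Att(x)=\varnothing$ edge case. What the paper's direct argument buys is independence from Lemma~\ref{532-L5} (so the reader need not track that earlier induction), and it makes explicit the model-level mechanism --- which is pedagogically useful here, since the surrounding discussion (Remark~\ref{532-R14}, Example~\ref{532-E15}) is about exactly when $P(\bigvee_i y_i)$ can equal~$1$ without any single $P(y_i)$ doing so. Your ``self-contained'' alternative using monotonicity and subadditivity is essentially the paper's proof rephrased in axiomatic language rather than as explicit sums over models.
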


\begin{proof}
Let us use Figure \ref{532-F1} where $\{y_i\}=Att(x)$.

\paragraph{Case 1.} Assume that $P(y_1) =1$.  We need to show that $P(x) =0$.  We have:
\[
P(x) = P(\bigwedge_i \neg y_i) \eqno \eqr{E3}
\]
We also have
\[
P(A) =\sum_{\vare\Vdash A}P(\vare) \eqno \eqr{P1}
\]
Therefore
\[
P(x) = \sum_{\vare\Vdash \bigwedge_i\neg y_i}P(\vare)
\]

\[
P(x)=\sum_{\vare\Vdash\neg y_1\wedge\bigwedge^{n}_{j=1} \neg
  y_j}P(\vare) \eqno (i)
\]
but
\[
P(y_1)=\sum_{\vare\Vdash y_1}P(\vare)=1
\]
Therefore we have
\[
\sum_{\vare \Vdash \neg y_1} P(\vare) =0 \eqno (ii)
\]
From (i) and (ii) we get that $P(x) =0$.

\paragraph{Case 2.}   We assume that for all $i$, $P(y_i)=0$ and we need to show that $P(x) =1$.

We have
\[
P(x) = P(\bigwedge\neg y_i)
\]

\[P(x) =1-P(\bigvee y_i)
\eqno (iii)
\]
We also have
\[
P(\bigvee y_i)=\Sigma_{\vare\Vdash \bigvee y_i} P(\vare ) \eqno (iv)
\]

Suppose for some $\vare'$ such that $\vare'\Vdash \bigvee y_i$ we have
$P(\vare') > 0$.  But $\vare' \Vdash \bigvee y_i$ implies
$\vare'\Vdash y_i$, for some $i$.

Say $i=1$.  Thus we have $\vare' \Vdash y_1$ and $P(y_1) =0$ and
$P(\vare') > 0$. This is impossible since
\[
P(y_1)=\sum_{\vare\Vdash y_1}P(\vare) \eqno (v)
\]
Therefore for all $\vare$ such that $\vare\Vdash \bigvee y_i$ we have
that $P(\vare)=0$. Therefore by (iii) and (iv) we get
\[
P(x) =1.
\]
\end{proof}

\begin{remark}\label{532-R14}
Let \tuple{S,R} be a network and let $P$ be a semantic probability for
\tuple{S,R} (Method 2).

Let $\lambda$ be defined as follows, for $x\in S$.
\[
\lambda (x) = \left\{
\begin{array}{l}
\inc,\ws{if} P(x) =1\\[1ex]
\exc,\ws{if} P(x) =0\\[1ex]
\und,\ws{if} 0 < P(x) < 1
\end{array}
\right.
\]

The perceptive reader might expect us to say that $\lambda$ is a
legitimate Caminada labelling, especially in view of Lemma
\ref{532-L13}.  This is not the case as Example~\ref{532-E15}
shows.
\end{remark}

\begin{example}\label{532-E15}
This example shows that in the probabilistic semantics it is possible
to have $P(x)=0$, while for all attackers $y$ of $x$ we have $0 < P(y)
< 1$. Thus the nature of the probabilistic attack is different from
the traditional Dung one.  If $Att(x)$ is the set of all attackers of
$x$ and $P(\bigvee_{y\in Att(x)} y)=1$, then, and only then $P(x) =0$.

Thus the attackers of $x$ can attack with joint probability.

The example we give is the network of Figure \ref{532-F16}.

\begin{figure}
\centering \input{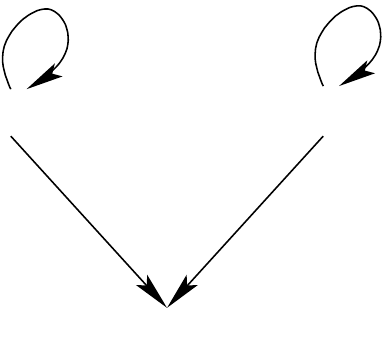_t}
\caption{A network with Method 1 and Method 2 probabilities.}\label{532-F16}
\end{figure}

This has a Method 1 probability of $P_1(a) =\half, P_1(b)
=\frac{1}{4}$ and $P_1(x) =\frac{1}{4}$.

Thus for any model $\Bm =\pm a\wedge\pm b \wedge x$ we have
\[
P_1(\Bm) =\half \times \half \times \frac{1}{4}=\frac{1}{16}
\]
and for any model
\[
\Bm' =\pm a\wedge\pm \wedge\neg x
\]
we have
\[
P_1(\Bm)=\half \times\half\times \frac{3}{4}=\frac{3}{16}.
\]

Figure \ref{532-F16} also has a Method 2 probability model. We can
have
\[\begin{array}{l}
P_2(a) =P_2(b) =\half\\[1ex]
 P_2(x) =0.
\end{array}
\]
Let us check what values to give to the models. The models are:
\[\begin{array}{l}
m_1= x\wedge a\wedge b\\[1ex]
 m_2=x\wedge a\wedge\neg b\\[1ex]
 m_3 = x\wedge\neg
a\wedge b\\[1ex]
 m_4=x\wedge\neg a\wedge\neg b\\[1ex]
 m_5=\neg x\wedge a\wedge
b\\[1ex]
 m_6=\neg x \wedge a \wedge\neg b\\[1ex]
 m_7=\neg x \wedge\neg a\wedge
b\\[1ex]
 m_8=\neg x \wedge\neg a \wedge \neg b.
\end{array}
\]
We want the following equations to be satisfied.
\begin{enumerate}
\item $P_2 (x) =0$. This means we need to let
\[
P_2(m_i) =0, i=1,\ldots,4.
\]
\item $P_2(a) =\half$. This means we need to let
\[\begin{array}{l}
P_2(m_5) +P_2(m_6)=\half\\[1ex]
 P_2(m_7)+P_2(m_8)=\half.
\end{array}
\]
\item $P_2(b) =\half$, yields the equations
\[\begin{array}{l}
P_2(m_5) +P_2(m_7) =\half\\[1ex]
 P_2(m_6)+P_2(m_8) =\half.
\end{array}
\]
\item We also need to have the equation
\[
0=P_2(x) =P_2(\neg a \wedge\neg b)
\]
Therefore $P_2(m_8)=0$.

We thus have the following equations left
\begin{enumerate}
\item $P_2(m_5) +P_2(m_6)=\half$
\item $P_2(m_7) =\half$
\item $P_2(m_5)+P_2(m_7)=\half$
\item $P_2(m_6)=\half$.
\end{enumerate}
From (b) and ( c) we get $P_2(m_5) =0$.  This makes
$P_2(m_6)=\half$. Thus we get the following solution:
\[\begin{array}{l}
P_2(m_i) =0, \mbox{ for } i=1, 2, 3, 4, 5, 8\\[1ex]
 P_2(m_6) =P_2(m_7)
=\half.
\end{array}\]
\end{enumerate}

Note that the equations \eqr{E3} hold for $P_1$ and $P_2$:
\[\begin{array}{l}
P(a) =1-P(a)\\[1ex]
 P(b) =1-P(b)
\end{array}
\]
hold of both $P_1$ and $P_2$. As for $P(x) = P(\neg a\wedge\neg b)$ we
check
\[\begin{array}{lcl}
\frac{1}{4} =P_1(x) &=& P_1(\neg a \wedge\neg b)\\[1ex]
 &=& P_1(\neg
a)\times P_1 (\neg b) =\frac{1}{4}.
\end{array}
\]

For $P_2$ we have
\[\begin{array}{l}
0= P_2(x) =P_2(\neg a \wedge\neg b)\\[1ex]
 \qquad P_2(\neg (a\vee b)
=1-P_2(a\vee b)\\[1ex]
 P_2(a\vee b) = P_2(m_1)+P_2(m_2)\\[1ex]
 \qquad +P_2 (m_3)
+P_2(m_5) +P_2(m_6)\\[1ex]
 \qquad + P_2 (m_7) = 0+0+0+\half+\half=1.
\end{array}
\]
Thus $P_2(\neg a\wedge\neg b)=0$.

So $P_1$ and $P_2$ are legitimate probabilities on Figure
\ref{532-F16}.  $P_1$ is a Method 1 probability and $P_2$ is a
Method 2 probability.
\end{example}

\begin{definition}\label{532-D20}
We now define the Gabbay--Rodrigues Probabilistic Labelling $\Pi$ on a
network \tuple{S,R}.  $\Pi$ is a \{\inc, \exc, \und\}-labelling satisfying
the following.

There exists a semantic probability $P$ on \tuple{S,R} such that for
all $x\in S$
\begin{enumerate}
\item $\Pi(x)=\inc$, if $P(\bigvee Att(x)) =0$
\item $\Pi(x)=\exc$, if $P(\bigvee Att(x)) =1$
\item $\Pi(x)=\und$, if $0 < P (\bigvee Att(x)) < 1$
\end{enumerate}
\end{definition}

\begin{example}\label{532-E21}
This example is due to M. Thimm, oral communication, 24th October
2014.  Consider Figure~\ref{532-F22}.

\begin{figure}
\centering \input{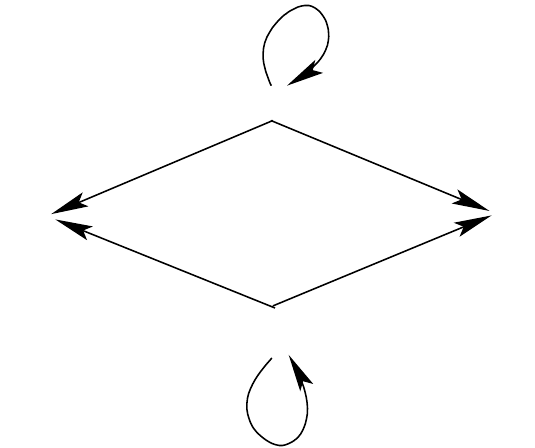_t}
\caption{Mirrored network of Figure~\ref{532-F16}.}\label{532-F22}
\end{figure}

This figure contains Figure \ref{532-F16} and its mirror image. We saw
that in Figure \ref{532-F16} (as well as in this Figure \ref{532-F22})
any probability on the figures must yield
\[
P(a) =P(b) =\half.
\]
Figure \ref{532-F16} allowed for two possibilities for $x$. $P_1(x)
=\frac{1}{4}$ and $P_2(x) =0$. Let us try $P$ for our Figure
\ref{532-F22} with
\[ 
P(x_1)=\frac{1}{4}\mbox{ and } P(x_2) =0.
\]
This is not possible because we must have
\[
P(x_i) =P(\neg a\wedge\neg b).
\]
So $P(x_1)$ must be equal to $P(x_2)$.

This example will show in the comparison with the literature section
that our probability semantics is different from that of M. Thimm in
\cite{thimm:12}.

See also Example \ref{532-E10}.
\end{example}

%%%%%%%
%% INSERT 1 here
%%%%%%%
\begin{theorem}\label{532-NT-1}
Let \tuple{S,R} be a network and let $\lambda$ be a legitimate
Caminada labelling on $S$, giving rise to a complete extension.  Then
there exists a probability $P_\lambda$ on the models (Method 2
probabilistic semantics) such that for all $x\in S$:
\begin{itemize}
\item $P_\lambda (x) =1$, if $\lambda (x)=\inc$
\item $P_\lambda (x)=0$, if $\lambda (x)=\exc$
\item $P_\lambda (x)=\half$, if $\lambda(x)=\und$.
\end{itemize}
\end{theorem}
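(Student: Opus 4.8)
The plan is to build the probability measure $P_\lambda$ directly from the given complete labelling $\lambda$ by placing all the mass on conjunctive models that encode $\lambda$ in a controlled way. Partition $S$ into the three sets $\inc_\lambda = \{x : \lambda(x)=\inc\}$, $\exc_\lambda = \{x : \lambda(x)=\exc\}$, and $\und_\lambda = \{x : \lambda(x)=\und\}$. In any model $\vare$ that receives positive probability I will require $\vare \Vdash x$ for every $x\in\inc_\lambda$ and $\vare \Vdash \neg x$ for every $x\in\exc_\lambda$; the atoms in $\und_\lambda$ are left free to vary over the $2^{|\und_\lambda|}$ possibilities. This immediately forces $P_\lambda(x)=1$ for $\lambda(x)=\inc$ and $P_\lambda(x)=0$ for $\lambda(x)=\exc$, so two of the three target conditions hold for any such distribution. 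The work is to choose the weights on the undecided part so that (a) each $x\in\und_\lambda$ gets probability exactly $\half$, and (b) equation \eqr{E3}, $P(x)=P(\bigwedge_{y\in Att(x)}\neg y)$, holds for every $x\in S$.

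For (a) the natural first attempt is the uniform product measure on $\und_\lambda$: treat the undecided atoms as independent fair coins, i.e.\ $P_\lambda(\vare) = 2^{-|\und_\lambda|}$ for each of the admissible models. This gives $P_\lambda(x)=\half$ for every $x\in\und_\lambda$. So the crux is verifying \eqr{E3} under this independent-fair-coin measure, and here I would split into cases according to $\lambda(x)$, using the legality conditions of a Caminada labelling (every $\inc$ node has all attackers $\exc$; every $\exc$ node has some attacker $\inc$; every $\und$ node has no $\inc$ attacker and at least one $\und$ attacker). If $\lambda(x)=\inc$, all $y\in Att(x)$ are $\exc$, so $\neg y$ holds in every admissible model and $P_\lambda(\bigwedge \neg y)=1=P_\lambda(x)$. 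If $\lambda(x)=\exc$, some attacker $y_0$ is $\inc$, so $\neg y_0$ holds in no admissible model and $P_\lambda(\bigwedge\neg y)=0=P_\lambda(x)$.

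The remaining case $\lambda(x)=\und$ is where I expect the obstacle. Here $P_\lambda(x)=\half$ and we need $P_\lambda(\bigwedge_{y\in Att(x)}\neg y)=\half$. The attackers of $x$ are all either $\exc$ (contributing $\neg y$ automatically) or $\und$; let $Att(x)\cap\und_\lambda=\{z_1,\ldots,z_k\}$ with $k\ge 1$. Under the independent-fair-coin measure $P_\lambda(\bigwedge_j \neg z_j)=2^{-k}$, which equals $\half$ only when $k=1$. So the naive uniform measure does \emph{not} work in general, and the measure must be chosen to couple the undecided atoms rather than make them independent. The right move is to design $P_\lambda$ so that, for each $\und$ node $x$, the event ``all undecided attackers of $x$ are false'' has probability exactly $\half$; a clean way is to correlate each $\und$ node with its attackers so that $x$ is true iff all its undecided attackers are false and this joint event splits the mass evenly. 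Concretely I would look for a measure supported on models where the truth pattern on $\und_\lambda$ is constrained by ``$x$ true $\iff$ all undecided attackers of $x$ false'' (mirroring \eqr{E1} restricted to the undecided sub-network), and then argue such a measure exists with all $\und$-marginals equal to $\half$ — for instance by taking the two ``mirror'' models (all $\und$ atoms as in one $\{0,1\}$-solution of the restricted equations, and its pointwise complement) each with weight $\half$, after checking the restricted Kleene equations on the $\und$-subgraph admit a two-valued solution and its complement. Handling self-attacking or mutually-attacking undecided nodes, and verifying \eqr{E3} still holds for $\exc$ and $\inc$ nodes under this correlated measure, is the delicate part; but once a measure meeting (a) and (b) is exhibited, the three bullet-point conclusions are immediate from the construction.
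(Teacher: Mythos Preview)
Your diagnosis is correct up to and including the point where you discard the independent fair-coin measure on $\und_\lambda$: that measure gives $P_\lambda(\bigwedge_j \neg z_j)=2^{-k}$, which only matches $\half$ when the node has exactly one undecided attacker. The gap is in your proposed fix. You look for a $\{0,1\}$-solution of the restricted equations ``$x$ true $\iff$ all undecided attackers of $x$ false'' on the $\und$-subgraph and then pair it with its pointwise complement. But such a two-valued solution need not exist: a self-attacking undecided node gives $x\iff\neg x$, which has no $\{0,1\}$ solution; and even when a solution exists, its complement is typically \emph{not} a solution (take a two-cycle $a\rightleftarrows b$: $a=1,b=0$ is a solution, but its complement $a=0,b=1$ is also a solution only by symmetry---add a third node and this breaks). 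So the construction you sketch does not go through in general, and you flag this yourself without resolving it.

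The paper's move is simpler and sidesteps the whole issue: it does \emph{not} require the support models to satisfy \eqr{E1} individually. It places mass $\half$ on the model where every $\und$ atom is true and mass $\half$ on the model where every $\und$ atom is false (with $\inc$ atoms true and $\exc$ atoms false in both). Then for $x\in\und_\lambda$ with undecided attackers $z_1,\ldots,z_k$ ($k\ge 1$) and possibly some $\exc$ attackers, one has $P_\lambda(x)=\half$ because $x$ holds only in the all-true model, and $P_\lambda(\bigwedge_{y\in Att(x)}\neg y)=\half$ because the conjunction holds only in the all-false model (the $\exc$ attackers are negated in both models, the $\und$ attackers only in the second). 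The two events live on \emph{different} support points, but each has mass $\half$, so \eqr{E3} holds. The $\inc$ and $\exc$ cases go exactly as you wrote. The key conceptual point you missed is that \eqr{E3} is an equality of probabilities, not a pointwise constraint on the support, so you are free to choose two ``extreme'' models neither of which is a Boolean solution of the attack equations.
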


\begin{proof}
(We use an idea from M. Thimm \cite{thimm:12})

Let $S=\{s_1\comma s_k\}$. Then when we regard the elements of $S$ as atomic
propositions in classical propositional logic, there are $2^k$ models based 
on $S$. Each of these models gives values $0$ (false) or $1$ (true) to each 
atomic proposition. Each such a model can be represented by a conjunction
of the form $\alpha=\bigwedge_i \pm s_i$. $\alpha$ represents the model which 
gives value $1$ to $s_i$ if $+s_i$ appears in $\alpha$  and gives value $0$ to 
$s_i$ if $-s_i$ appears in $\alpha$. Given 
a model we can construct the respective $\alpha$ for it. Let
\[
\displaystyle \alpha_1=\bigwedge_{\lambda(s)=\inc} s;\quad 
\displaystyle \alpha_0=\bigwedge_{\lambda(s)=\exc} \neg s;\quad 
\displaystyle \alpha_\half=\bigwedge_{\lambda(s)=\und} s;\quad \ws{and}
\displaystyle \beta_\half=\bigwedge_{\lambda(s)=\und} \neg s.
\]

We now define a Method 2 probability $P_\lambda$ on the models.
\begin{enumerate}
\item $P_\lambda (\alpha_1\wedge\alpha_0\wedge\alpha_\half) =\half$
\item $P_\lambda (\alpha_1\wedge\alpha_0\wedge\beta_\half) =\half$
\item $P_\lambda (m) =0$, for any other model, $m$ different from the
  above.
\end{enumerate}

Clearly $P_\lambda$ is a probability. We examine its properties
\begin{enumerate}
\renewcommand{\labelenumi}{(\roman{enumi})}
\item Let $x$ be such that $\lambda (x)=\inc$.

Then \[ P_\lambda (x) =\sum_{m\Vdash x} P_\lambda (m).
\]
Only (1) and (2) can contribute to $P_\lambda (x)$, so the value is 1.
\item Let $\lambda (x) =\exc$.

The only two models that can contribute to $P_\lambda (x)$ are in (1)
and (2) above, but they prove $\neg x$. So $P_\lambda (x) =0$.
\item Let $P_\lambda (x)=\und$.

Then clearly $P_\lambda (x)$ gets a contribution from (1) only. We get
$P_\lambda (x) =\half$.
\end{enumerate}

We now need to verify that $P_\lambda$ actually satisfies the
equations of \eqr{E3}.

Let $x\in S$ and let $y_i$ be its attackers. We want to show that
\[
P_\lambda (x) =P_\lambda (\bigwedge_i\neg y_i)
\]
or
\[
P_\lambda(x) =1-P_\lambda (\bigvee_i y_i).
\]
\begin{enumerate}
\renewcommand{\labelenumi}{(\roman{enumi})} \setcounter{enumi}{3}
\item Assume $P_\lambda (x) =1$. Then $P_\lambda (x)$ gets
  contributions from both (1) and (2). The only option is that then
  $\lambda (x)=$ in, and so all attackers of $y_i$ of $x$ are out, so
  $\alpha_0\Vdash \bigwedge\neg y_i$ and so $P_\lambda
  (\bigwedge_i\neg y_i)=1$, because it gets contributions from both
  (1) and (2).

\item Assume $P_\lambda (x) =0$.

Thus neither (1) nor (2) contribute to $P_\lambda (x)$. Therefore
$\alpha_0\Vdash x$ and so $\lambda (x) =$ out and so for some
attacker $y_i, \lambda (y_i)=$ in and so $\alpha_1\Vdash y_i$ and so
$P_\lambda (\bigwedge_i\neg y_i)$ cannot get any contribution either
from (1) or from (2) and so $P_\lambda (\bigwedge_i\neg y_i)=0$.

\item Assume that $P_\lambda (x) =\half$.

So $P_\lambda (x)$ can get a contribution either from (1) or from (2),
but not from both.  So $\lambda (x)$ must be undecided.

So the attackers $y_i$ of $x$ are either \exc\ (with $P_\lambda
(y_i)=0)$) or \und\ (with $P_\lambda (y_i) =\half$), and we have that
at least one attacker $y$ of $x$ is \und.
\end{enumerate}

Let $y^0_i$ be the attackers that are out and let $y^\half_j$ be the
undecided attackers. Consider
\[
e=\bigwedge_i \neg y^0_i\wedge\bigwedge_j \neg y^\half_j.
\]
The only model which can both contribute to $P_\lambda (e)$ is
$\alpha_1\wedge\alpha_0\wedge\beta_\half$ and thus $P_\lambda (e)
=\half$.

Thus from (iv), (v) and (vi) we get that \eqr{E3} holds for $P_\lambda$.
\end{proof}

\begin{remark}\label{532-NR-2}
Note that the $P_\lambda$ of Theorem \ref{532-NT-1} is strictly Method
2 probability.  For example we saw that the network of Figure
\ref{532-f2} with $a=b= $ und cannot solve Method 1 probability.  The
next section will see how far we can go with Method 1 probability.
\end{remark}

\paragraph{Summary of the results so far for the semantical probabilistic Method 2.}
We saw that Dung's traditional complete extensions strictly contain
the probabilistic Method 1 extensions and is strictly contained in the
probabilistic Method 2 extensions.

\section{Approximating the semantic probability by syntactic probability}
We have seen in Theorem \ref{532-NT-1} that the Method 2 probabilistic
semantics can give us all the traditional Dung complete
extensions. This result, together with the probabilistic semantics
$P_2$ of Example \ref{532-E15} would show that Method 2 semantics is
stronger than traditional Dung complete extensions semantics.

This section examines how far we can stretch the applicability of the
syntactical probability approach (Method 1). We know from the 
``all-undecided'' extension for the network in Figure~\ref{532-F3} that
there are cases where we cannot give Method 1 probability. We ask in
this section, can we approximate such extensions by Method 1
probabilities?

We find that the answer is yes.

Let \tuple{S,R} be a network. Let $\lambda$ be a legitimate Caminada
labelling giving rise to a complete extension $E=E_\lambda$. If the
extension is a preferred extension, then there exists a solution $f$
to the \eqinv\  equations which yield $\lambda$ and $f$ is
actually a Method 1 (and here also a Method 2) probabilistic semantics
for \tuple{S,R}.  The question remains as to what happens in the case
where $\lambda$ is not a preferred extension. In this case we are not
sure whether $\lambda$ can be realised by a solution $f$ of the\\[1ex]
 \quad 
\eqinv\ equations.  In fact there are examples of networks where no
such $f$ exists.  We know from Theorem \ref{532-NT-1} that there
exists a probability function $P_\lambda$ on models that would yield
$\lambda$ according to Definition \ref{532-D20}.  We seek an $\eqinv$
function which approximates this probability.

We shall use the ideas of Example \ref{532-f4}.

%%%%%%%
%% INSERT 2 here
%%%%%%%
\begin{remark}\label{532-NR-3}
We need to use some special networks.

\begin{enumerate}
\item Consider Figure \ref{532-NF-4}, which we shall call $U_n$.  $n =
  1, 2, 3, \ldots$.

\begin{figure}
\centering \input{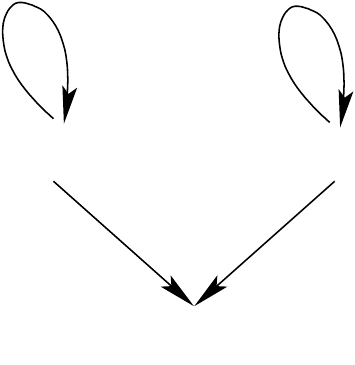_t}
\caption{Multiple attacks by undecided nodes.}\label{532-NF-4}
\end{figure}

The $\eqinv$ equations solve for this figure as $u_i =\half, i=1\comma
n$.
\[u=\frac{1}{2^n}
\]
Thus if $u$ attacks any node $x$, its ``impact'' on $x$ is the
multiplicative value $1-\frac{1}{2^n}$. For $n$ very large, the attack
is almost negligible.
\item Let \tuple{S,R} be any network. Let $u$ be a node not in $S$. If
  we add $u$ to $S$ and let it attack all elements of $S$, we can
  assume in view of (1) above that the $\eqinv$ value of $u$ is
  $\frac{1}{2^n}$. Figure \ref{532-NF-5} depicts this scenario.

\begin{figure}
\centering \input{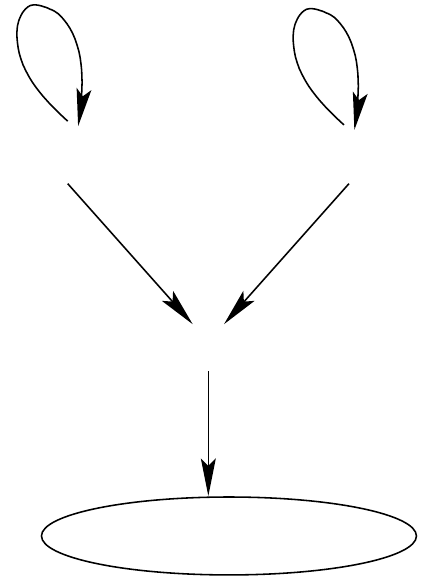_t}
\caption{Scenario depicted in Remark~\ref{532-NR-3}.\label{532-NF-5}}
\end{figure}

We suppress $\{u_1\comma u_n\}$ and just record that $u
=\frac{1}{2^{n}}$.
\end{enumerate}

\end{remark}

\begin{construction}\label{532-C23}
Let \tuple{S,R} be given and let $\lambda$ be a legitimate Caminada
labelling giving rise to a non-preferred extension.

Let $u\not\in S$ be a new point and assume in view of Remark
\ref{532-NR-3} that the value of $u$ is very very small. Let
\[
S' = S\cup \{u\}
\] and let 
\[
R' = R\cup \{(u,v)|\lambda(v) = \und\}.
\]

Let $\lambda' = \lambda \cup \{(u,\und)\}$.

Let Att$(x)$ be the set of all attackers of $x$ in \tuple{S,R} and let
Att$'(x)$ be the set of all attackers of $x$ in \tuple{S',R'}.

We have if $\lambda'(x) \in \{\mbox{\inc, \exc}\}$, then $u \not\in
\mbox{ Att}'(x)$.

If $\lambda'(x) =\und$, then $y \in \mbox{ Att}'(u)$.

Consider the following set of equations on \tuple{S',R'}.

\[
x =1, \mbox{ if } \lambda'(x) =\mbox{\inc} \eqno\eqr{EQ1}
\]

\[
x=0, \mbox{ if } \lambda'(x)=\mbox{\exc} \eqno\eqr{EQ0}
\]

\[
x =\Pi(1-y)_{y\in Att'(x) \mbox{in } \tuple{S', R'}}, \mbox{ if }
\lambda'(x) =\und \eqno\eqr{EQU}
\]
This set of equations has a solution \Bf.

We claim the following
\begin{enumerate}
\item $\lambda(f)$ is a complete extension
\item $\lambda(f) =\lambda'$
\end{enumerate}

It is clear that $\lambda(f)(x)=\lambda'(x)$, for $\lambda'(x)\in
\{\mbox{\inc, \exc}\}$.  Does $\lambda(f)$ agree with $\lambda'$ on
undecided points of $\lambda'$?  The answer is that it must be so,
because $\lambda'$ is a preferred extension.  So $\lambda(f)$ cannot
be an extension with more zeros and ones than $\lambda'$.
\end{construction}

\begin{figure}[htb]
\centering \input{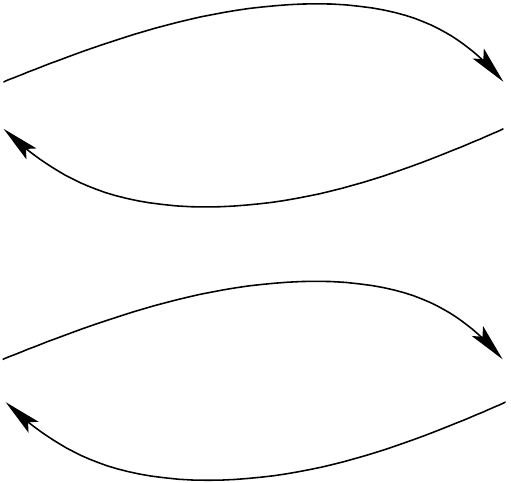_t}
\caption{A network with two cycles.\label{532-F25}}
\end{figure}

\begin{remark}\label{532-R24}
The perceptive reader might ask why do we use those particular
equations in Construction~\ref{532-C23} (page~\pageref{532-C23})?  
The answer can be seen from Figure \ref{532-F25}.

Consider $\lambda(a)=\inc$, $\lambda(b)=\exc$,
$\lambda(c)=\lambda(d)=\und$.

We create Figure \ref{532-F26}.

\begin{figure}
\centering \input{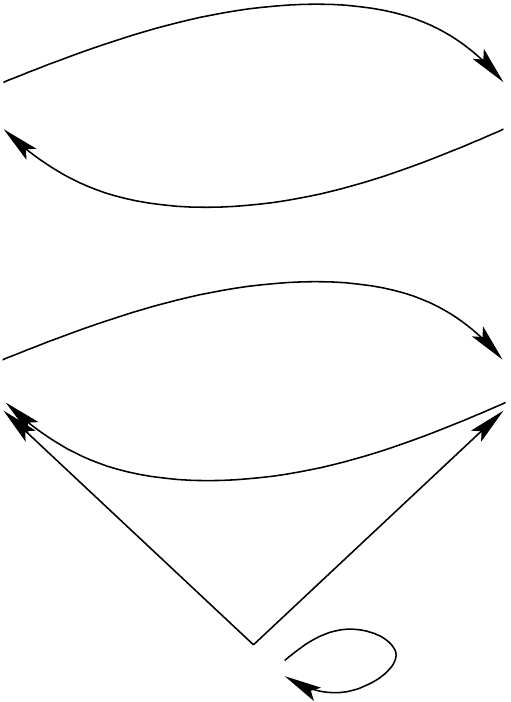_t}
\caption{A self-attacking node attacking one of the cycles
in the network of Figure~\ref{532-F25}.}\label{532-F26}
\end{figure}

We take the equation
\[\begin{array}{l}
a=1, b=0\\[1ex]
 c=(1-d)(1-u)\\[1ex]
 d=(1-c)(1-u)\\[1ex]
 u=1-u.
\end{array}
\]

The solution for the equations for $c,d$ and $u$ are
\[\begin{array}{l}
u=\half\\[1ex]
 c=d=\frac{1}{3}
\end{array}
\]

We have to insist on $a=1, b=0$. If we do not insist and write the
usual equations
\[\begin{array}{l}
a=1-b\\[1ex]
 b=1-a,
\end{array}
\]
we might get a different solution, e.g.
\[
b=1, a=0.
\]
This not the original $\lambda$.
\end{remark}

\begin{remark}\label{532-R27}
This remark motivates and proves the next Theorem \ref{532-T28}.  We
need some notation. Let $Q$ be a set of atoms. By the models of $Q$ 
(based on $Q$) we mean all conjunction normal forms of atoms from $Q$ 
or their negations. So, for example, if $Q = \{ a, b, c \}$, we get 8 models,
namely
\[\begin{array}{c}
m_1=a\wedge b\wedge c\\[1ex]
 \vdots\\[1ex]
 m_8=\neg a\wedge\neg b\wedge\neg c.
\end{array}
\]
If we have atoms
\[
Q_1 =\{a_i\}, Q_2 =\{b_j\}, Q_3=\{c_k\}
\]
where $Q_i$ are pairwise disjoint we can write the models of $Q_1\cup
Q_2\cup Q_3$ in the form
\[
\alpha\wedge\beta\wedge \gamma
\]
where $\alpha$ is a model of $Q_1, \beta$ of $Q_2$ and $\gamma$ of
$Q_3$.

For example
\[
\alpha_1\wedge\beta_1\wedge\gamma_1=(a_1\wedge a_2\wedge\ldots)
\wedge(\neg b_1\wedge b_2\wedge\ldots) \wedge (c_2\wedge\ldots).
\]

Now let \tuple{S,R} and $\lambda$ be as in Construction \ref{532-C23}.
Remember we assume that the value of $u$ is very very small, and so
the attack value $(1-u)$ is very close to $1$. Consider $\lambda'$ and $f$ and
$\lambda(f)$ again as in Construction \ref{532-C23}.  $f$ is a
solution of \eqinv\ equations~\eqr{EQ1}, \eqr{EQ0} and \eqr{EQU}.  Therefore any
model of $S'$, say $\alpha =\pm s_1\wedge \pm s_2\wedge\pm \ldots
\wedge\pm s_k\wedge \pm u$ where $S=\{s_1\comma s_k\}$ will have its
probability semantics as
\[
P_f(\alpha =\Pi^k_{i=1} f(\pm s_k))) \times f(\pm u) \eqno (*)
\]
where
\[\begin{array}{l}
f(+s) =f(s)\\[1ex]
 f(-s) =1-f(s).
\end{array}
\]

In particular, we have the following:
\begin{enumerate}
\item Let $E^+ =\{e^+_1,\ldots\}$ be the subset of $S$ such that
  $\lambda (e^+_i)=\inc$. Let $E^- =\{e^-_j\}$ be the subset of $S$
  such that $\lambda (e^-_j)=\exc$.  Let $E_{\rm und} = \{b_k\}$ be
  the set of all nodes in $S$ such that $\lambda (b_k)=\und$.

We therefore have that any model $\delta$ of $S'$ has the form
\[\begin{array}{lcl}
\delta &=& \bigwedge_i \pm e^+_i\wedge\bigwedge_i\pm
e^-_j\wedge\bigwedge_k \pm b_k\wedge \pm u\\[1ex]
 &=& \alpha \wedge\beta
\pm u
\end{array}
\]
where $\alpha$ is a model of $E^+\cup E^-$ and $\beta$ is a model of
$E_{\rm und}$.

Let $\alpha_{1,0}$ be the particular conjunction
\[
\alpha_{1,0} =\bigwedge_i e^+_i\wedge\bigwedge_j \neg e^-_j.
\]
Let $\beta$ be any model of $E_{\rm und}$.  Consider $P_f(\delta),
\delta = \alpha \wedge \beta\wedge\pm u$.  Then by (*) we have that
\[
P_f(\delta) =0, \mbox{ if } \alpha\neq \alpha_{1,0}.  \eqno (**)
\]

Since $P_f$ is a probability, we have for any $s\in S'$
\[
P_f(s) = P_f (\bigwedge_{y\in Att'(s)} \neg y).
\]
Note that for $s\in S, s\neq u$ such that $\lambda (s) \in \{\mbox{\inc,
  \exc}\}$, $u$ does not attack $s$, and so we have
\[\begin{array}{lcl}
P_f (s)& = &P(f) (\bigwedge_{y\in Att(s)}\neg y)\\[1ex]
 &=& \Pi_{y\in
  Att(s)} (1-f(y))
\end{array}
\eqno (\sharp 1)
\]

For $u$ we have that $u$ is very small and so $P_f(u)
=\frac{1}{2^{n}}$.

For $s\in S$ such that $\lambda (s)=\und$, we have that $u$ attacks
$s$ and so
\[\begin{array}{lcl}
P_f(s) &=& P_f(\bigwedge_{y\in Att'(s)} \neg y)\\[1ex]
 &=& (\Pi_{y\in
  Att(s)} (1-f(y)) \times (1-\frac{1}{2^{n}})
\end{array}
\eqno (\sharp 2)
\]
The $(1-\frac{1}{2^{n}})$ is the attack of $u$.
\end{enumerate}
We ask what are the attackers of $s \in E_{\rm und}$?  They cannot be
nodes $x$ such that $\lambda (x)=$ in, because then $s$ would be out.
So the value of $f(y)$, (for $y \in Att(s)$) is either $0$ or a value in
$(0,1)$.

So we can continue and write
\[
P_f(s) =(1-\frac{1}{2^n}) \Pi_{\parbox{1.5cm}{\scriptsize$y\in Att(s)$\\
$\lambda(y)=\und$}} (1-f(y)) \eqno (\sharp 3)
\]
Note that $0 < P_f(s) < 1$, because all the $f(y)$, for $\lambda
(y)=\und$, satisfy $0 < f(y) < 1$.

We also have
\[
\sum_{\mbox{all models } m} P_f(m) =1.  \eqno (\sharp 4)
\]
Since(**) holds, we need consider only models $m$ of the form
$\alpha_{1,0} \wedge\beta\wedge\pm u$.

We can write
\[
1=\sum_{\beta \wedge\pm u} P_f (\alpha_{1,0} \wedge\beta \wedge\pm u)
\eqno (\sharp 5)
\]
where $\beta$ is a model of $E_{\rm und}$.  Let us analyse $(\sharp
5)$ a bit more.

Assume $\beta =\bigwedge_k \pm b_k$.

So
\[
P_f (\alpha_{0,1} \wedge\beta\wedge u )+P_f (\alpha_{0,1}
\wedge\beta\wedge\neg u) =\Pi_k f(\pm b_k).  \eqno (\sharp 6)\] We
thus get that:
\[
\sum_\beta \Pi_k f(\pm b_k) =1.  \eqno (\sharp 7)
\]

$(\sharp 7)$ says something very interesting. It says that $f$
restricted to $E_{\rm und}$ gives a proper probability distribution on
the models of $E_{\rm und}$.

This combined with $(\sharp 3)$ gives us the following result.

Consider $(E_{\rm und}, R_{\rm und})$ where $R_{\rm und} =
R\upharpoonright E_{\rm und}$. Then $f\upharpoonright E_{\rm und}$ is
a proper probability distribution on $(E_{\rm und}, R_{\rm und})$.

Does it satisfy the proper equations?

Let $s\in E_{\rm und}$.  Do we have
\[
P_{\rm und} (s) \stackrel{?}{=} P_{\rm und}(\bigwedge_{\begin{array}{c}y\in E_{\rm
      und}\\
yRx
\end{array}} \neg y)
\]
Let us check.

%%%%%%%%%%
%%% INSERT 3
%%%%%%%%%%
The real equation is
\[
P_{\rm und} (s) = P_{\rm und}(\bigwedge_{
\begin{array}{c}
y'\in E_{\rm und}\\[1ex]
yRx\end{array}} \neg y) \times (1-u) \eqno (\sharp
8)\] Since $u$ is very small, we have a very good
approximation.\footnote{The perceptive reader might ask what happens
  if we let $u$ converge to 0?  The answer is that we get a proper
  $\eqinv$ extension. However, this may be an all undecided extension
  (which is what we do want), or it may be a complete extension
  properly containing all the undecided extensions (which is not what
  we want!).

We may decide to do what physicists do to their equations. Write the
equations in full and simply neglect any item containing higher order
$u$, i.e., $u^2, u^3$, etc. This is reasonable when the value of each
node is small.}

We can now define a probability $P$ on \tuple{S,R}.  Let $m
=\alpha\wedge\beta$ be a model, where $\alpha$ is a model for $E^+\cup
E^-$ and $\beta$ is a model for $E_{\rm und}$.

Then define $P$ as follows
\[\begin{array}{l}
P(\alpha\wedge\beta) = 0,\mbox{ if } \alpha=\neg
\alpha_{1,0}\\[1ex]
 P(\alpha\wedge\beta) = P_{\rm und} (\beta),\mbox{ if }
\alpha = \alpha_{1,0}
\end{array}
\]
We need to show that approximately
\[
P(s) = P(\bigwedge_{y\in Att(s)} \neg y)
\]
If $s\in E^+\cup E^-$ this follows from $(\sharp 1)$.

If $s\in E_{\rm und}$, this follows from $(\sharp 3)$ and $(\sharp
8)$.

Note that since the $f$ involved came from \eqinv\ equations, $P$
satisfies the following on \tuple{S,R}.
\[\begin{array}{l}
P(s) =0, \mbox{ if some } y \in Att(s) P(y)=1\\[1ex]
 P(s) =1, \mbox{ if for
  all } y \in Att(s), P(y) =0\\[1ex]
 P(s) =\mbox{ undecided, otherwise}.
\end{array}
\eqno (\sharp 9)
\]
\end{remark}

\begin{theorem}\label{532-T28}{\ }
\begin{enumerate}
\item Let \tuple{S,R} be a network and let $\lambda$ be a legitimate
  Caminada labelling on $S$. Then there exists a Method 1 probability
  distribution $P_\lambda$, which almost satisfies equation \eqr{E3}, 
  namely for every $\varepsilon$, there exists a Method 1 probability $P_\lambda$ 
  depending on $\varepsilon$, such that for every $x$ and its attackers $y_i$, 
  we have $|P_\lambda(x) - P_\lambda  (\wedge \neg y_i)| <\varepsilon$, such
  that
\[\begin{array}{l}
\lambda (x)=\inc,\mbox{ if } P_\lambda (x) =1\\[1ex]
 \lambda (x)=
\exc,\mbox{ if } P_\lambda (x) =0\\[1ex]
 \lambda (x)=\und,\mbox{ if } 0 <
P_\lambda (x) < 1.
\end{array}
\]

\item $P$ is obtained as follows

\paragraph{Case 1.} $\lambda$ is a preferred extension.  Then let $f$ be a solution of \eqinv\  for \tuple{S,R}. Let $P_\lambda = f$.
\paragraph{Case 2.} $\lambda$ is not a preferred extension.  

Let $E^\lambda_{\rm und}=\{x|\lambda (x) = {\rm und}\}$.  Consider
\tuple{S',R'}, where $S' =E^\lambda_{\rm und}\cup \{u\}$, where $u$ is a
new point not in $S$ with value almost 0.
\[
R' = R\upharpoonright E^\lambda_{\rm und} \cup \{u\} \times
E^\lambda_{\rm und}.
\]

Then \tuple{S',R'} has only one extension (all undecided). Let $f'$ be a
solution to \eqinv\ on \tuple{S',R'}.  We now define $P_\lambda$ on
\tuple{S,R}.

Let $\alpha_{1,0} =\bigwedge_{\lambda (x) =\mbox{ in}} x
\wedge\bigwedge_{\lambda(y)=\mbox{ out}} \neg y$.

Let $m = \alpha \wedge\beta$ be an arbitrary model of $S$, where
$\alpha$ is a model of $\{x|\lambda (x) \in \{\mbox{\inc, \exc}\}$ and
$\beta$ is a model of $E^\lambda_{\rm und}$.  Define $P_\lambda
(\alpha\wedge\beta)$ to be
\[\begin{array}{l}
P_\lambda (\alpha\wedge\beta =0\mbox{ if } \alpha\neq
\alpha_{1,0}\\[1ex]
 P_\lambda (\alpha_{1,0} \wedge\beta) =
f'(\beta)\\[1ex]
 \mbox{where } \beta =\bigwedge_{s\in E^\lambda_{\rm und}}
\pm s\\[1ex]
 \mbox{and } f(\beta) =\Pi_{\pm s \mbox{ in } \beta} f(\pm s).
\end{array}\]
\end{enumerate}
\end{theorem}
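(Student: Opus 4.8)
The plan is to assemble the facts already worked out in Remark~\ref{532-R27} and Construction~\ref{532-C23}, splitting according to whether $\lambda$ is preferred; the two cases are exhaustive, since a legal Caminada labelling yields a complete extension and a complete extension with no undecided nodes is stable, hence preferred (so in the non-preferred case $E_{\rm und}^\lambda=\{x\mid\lambda(x)=\und\}$ is nonempty). If $\lambda$ is a preferred extension, I would invoke Theorem~\ref{532-f2} to get a solution $f$ of the \eqinv\ equations \eqr{E4} with $\lambda(f)=\lambda$, and set $P_\lambda=f$. Since in Method~1 all atoms are independent, $f$ determines the product distribution on the $2^{|S|}$ models, and \eqr{E4} is literally \eqr{E3*}, so \eqr{E3} holds \emph{exactly}; hence $|P_\lambda(x)-P_\lambda(\bigwedge_i\neg y_i)|=0<\varepsilon$ for every $\varepsilon$, and the $\inc/\exc/\und$ correspondence is Theorem~\ref{532-f1} together with Definition~\ref{def:CP-GR-translation}.

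If $\lambda$ is not preferred, I would form $\tuple{S',R'}$ as in Construction~\ref{532-C23}: $S'=E_{\rm und}^\lambda\cup\{u\}$ with $u$ a fresh node carrying the $U_n$-cascade of Remark~\ref{532-NR-3}, so that its \eqinv\ value is $1/2^{n}$, and $R'=(R\upharpoonright E_{\rm und}^\lambda)\cup\{(u,v)\mid v\in E_{\rm und}^\lambda\}$. The steps are: (i) show that $\tuple{S',R'}$ has the all-undecided labelling as its unique complete extension, so that any \eqinv\ solution $f'$ has $0<f'(s)<1$ for every $s\in E_{\rm und}^\lambda$; (ii) define $P_\lambda$ on the models of $S$ as in the statement, i.e.\ $0$ off the models whose \inc/\exc\ part equals $\alpha_{1,0}=\bigwedge_{\lambda(x)=\inc}x\wedge\bigwedge_{\lambda(y)=\exc}\neg y$, and $\prod_{\pm s\text{ in }\beta}f'(\pm s)$ on $\alpha_{1,0}\wedge\beta$; (iii) observe that $P_\lambda$ is a genuine Method~1 probability — it is the product distribution over the atoms of $S$ giving probability $1$ to each \inc-atom, $0$ to each \exc-atom and $f'(x)$ to each \und-atom, and its total mass is $1$ because $f'\upharpoonright E_{\rm und}^\lambda$ is itself a product distribution on the models of $E_{\rm und}^\lambda$ (the fact behind $(\sharp 7)$ of Remark~\ref{532-R27}) — and that $P_\lambda(x)$ equals $1$, $0$, $f'(x)\in(0,1)$ exactly when $\lambda(x)=\inc,\exc,\und$, which is the required correspondence.

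It then remains to check \eqr{E3} up to $\varepsilon$. For $x$ with $\lambda(x)\in\{\inc,\exc\}$ the equation holds exactly, by the same counting as in steps~(iv)--(v) of the proof of Theorem~\ref{532-NT-1}: legality makes every attacker of an \inc-node \exc\ (so $\alpha_{1,0}\Vdash\bigwedge_i\neg y_i$ and both sides are $1$) and gives every \exc-node an \inc-attacker (so both sides are $0$). For $x\in E_{\rm und}^\lambda$, marginalising the product distribution gives $P_\lambda(x)=f'(x)$ and $P_\lambda(\bigwedge_i\neg y_i)=\prod_{y\in Att(x),\,\lambda(y)=\und}(1-f'(y))$, because the \exc-attackers of $x$ contribute a factor $1$ (as $\alpha_{1,0}\Vdash\neg y$) and $x$ has no \inc-attacker; on the other hand the \eqinv\ equation \eqr{EQU} for $x$ in $\tuple{S',R'}$ reads $f'(x)=(1-\frac{1}{2^{n}})\prod_{y\in Att(x),\,\lambda(y)=\und}(1-f'(y))$, since the attackers of $x$ inside $S'$ are exactly $u$ and the \und-attackers of $x$ in $S$. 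Therefore $|P_\lambda(x)-P_\lambda(\bigwedge_i\neg y_i)|=\frac{1}{2^{n}}\prod(1-f'(y))\le\frac{1}{2^{n}}$, and choosing $n$ with $1/2^{n}<\varepsilon$ finishes the estimate.

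I expect the only genuinely delicate step to be (i): proving that the auxiliary network $\tuple{S',R'}$ collapses to the all-undecided extension, equivalently that its \eqinv\ solution sends every $s\in E_{\rm und}^\lambda$ strictly into $(0,1)$. The key observation is that $\lambda$, being a legal complete labelling, gives no \inc-attacker to any \und-node, so inside $E_{\rm und}^\lambda$ no node can ever be forced \inc, while the universal attack by the undecided node $u$ keeps every node from becoming \inc\ on its own; hence all of $E_{\rm und}^\lambda$ is \und, and Theorems~\ref{532-f1}--\ref{532-f2} translate this back to $0<f'(s)<1$. Everything else is the routine algebra of Remark~\ref{532-R27}, equations $(\sharp 1)$--$(\sharp 9)$.
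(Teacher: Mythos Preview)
Your proposal is correct and follows essentially the same route as the paper, which simply defers to the considerations of Remark~\ref{532-R27}; you have unpacked and organised those considerations (together with Construction~\ref{532-C23} and the model-counting of Theorem~\ref{532-NT-1}) into a clean case split with the explicit $1/2^{n}$ error bound. The only cosmetic point is that step~(i) is easier than you suggest: once $u$ is undecided and attacks every node of $E^\lambda_{\rm und}$, no such node can be \inc, hence none can be \exc\ either, so the all-undecided labelling is forced without appealing to properties of $\lambda$.
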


\begin{proof}
Follows from the considerations of Remark \ref{532-R27}.
\end{proof}

\begin{example}\label{532-E29}
Let us show how Theorem \ref{532-T28} works by doing a few examples.
\begin{enumerate}
\item Consider the network of Figure \ref{532-F25} and the extension
  $\lambda$ mentioned there, namely $\lambda(a) =\inc$, $\lambda (b)
  =\exc$, $\lambda ( c) =\lambda (d) =\und$.

Following our algorithms we look at the $\{c, d, u\}$ part of Figure
\ref{532-F26} and solve the equations. We get $u = \half,
c=d=\frac{1}{3}$.

The probability $P_\lambda$ will be as follows:
\[\begin{array}{l}
P_\lambda (\alpha\wedge\beta)=0\\[1ex]
 \mbox{ if } \alpha\neq a\wedge\neg
b.
\end{array}
\]

Now look at
\[\begin{array}{l}
\renewcommand{\arraystretch}{1.5} P_\lambda (a\wedge\neg b\wedge
c\wedge d)=\frac{1}{3}\times \frac{1}{3}=\frac{1}{9}\\[1ex]
 P_\lambda
(a\wedge\neg b\wedge c\wedge\neg d) =\frac{1}{3}\times \frac{2}{3}
=\frac{2}{9}\\[1ex]
 P_\lambda (a\wedge \neg b \wedge\neg c \wedge d)
=\frac{2}{3}\times \frac{1}{3}=\frac{2}{9}\\[1ex]
 P_\lambda (a\wedge\neg
b\wedge\neg c\wedge\neg d) =\frac{2}{3}\times \frac{2}{3}=\frac{4}{9}.
\end{array}
\]
\item Let us look at Figure \ref{532-F30}.

\begin{figure}
\centering \input{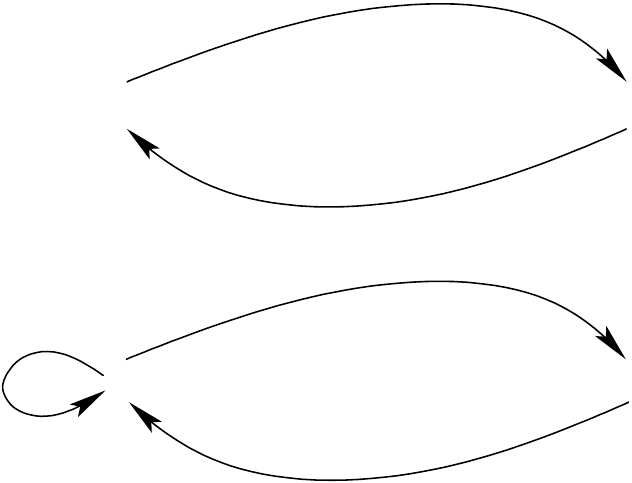_t}
\caption{Augmented network of Figure~\ref{532-F3} with node
$a$ as $c$ and $b$ as $d$ and an extra cycle.\label{532-F30}}
\end{figure}

With $\lambda (a)=\inc$, $\lambda(b)=\exc$, $\lambda ( c)=\lambda
(d)=\und$.

The $\{c,d\}$ part is Figure \ref{532-F3}.  Here we solve the
equations on the $\{c,d,u\}$ part associated with $\{c,d\}$, which is
the same as Figure \ref{532-F4}.  The solution is found in Example
\ref{532-f4}, with $u=\half$.

We get $u=\half; c = 0.36, 1-c = 0.764, d = 0.382, 1-d = 0.618$.  The
probability $P_\lambda$ of this case is $P_\lambda
(\alpha\wedge\beta)=0$, if $\alpha\neq a\wedge\neg b$.

\[\begin{array}{l}
P_\lambda (a\wedge\neg b\wedge c\wedge d)=0.236 \times
0.382=0.09\\[1ex]
 P_\lambda (a\wedge\neg b\wedge c\wedge\neg d) = 0.236
\times 0618 = 0.146\\[1ex]
 P_\lambda (a\wedge\neg b\wedge\neg c \wedge d) =
0.764 \times 0.382=0.292\\[1ex]
 P_\lambda (a\wedge\neg b\wedge\neg
c\wedge\neg d) = 0.764 \times 0.618 = 0.472.
\end{array}
\]

Indeed
\[
0.09 + 0.146 + 0.292 + 0.472 = 1.000.
\]

\end{enumerate}
\end{example}

We now discuss imposing probability on instantiated networks such as ASPIC+. We
begin with simple instantiations into classical propositional logic. 

\begin{definition}\label{def:532-D51}
\begin{enumerate}
\item \label{it:def-AIN-1}
An abstract instantiated network (into classical propositional logic) has
the form ${\cal A}=\tuple{S,R,I}$, where \tuple{S,R} is an abstract argumentation
network and $I$ is a mapping associating with each $x \in S$, a well-formed
formula $I(x)=\varphi_x$ of classical propositional logic.

\item For any $\cal A$ as in \ref{it:def-AIN-1}, we associate the theory
$\Delta_{\cal A}=\{ \varphi_x \iff \wedge_{(y,x) \in R}\neg \varphi_{y} \; 
|\; x \in S\}$.

\item A semantic probability model $P$ on $\cal A$ is a probability distribution
on the models based on $S$ such that for all $x \in S$, we have:
\[P(\varphi_x)=P(\wedge_{(y,x)\in R} \neg \varphi_y)\]
\end{enumerate}
\end{definition}

\begin{example}\label{ex:532-E52} 
Consider Figure~\ref{fig:532-F53} where
part (b) is an instatiation of part (a) with $I(x)=a_1 \vee a_2$ and $I(a_3)=
a_3$. The equations any probability assignment needs to satisfy are
\begin{eqnarray*}
P(a_1 \vee a_2) & = & 1\\[1ex]
P(a_3) & = & P(\neg (a_1 \vee a_2))\\[1ex]
& = & P(\neg a_1 \wedge \neg a_2)\\[1ex]
& = & 0.
\end{eqnarray*}
If we let $P(a_1)=x$, $P(a_2)=1-x$, $P(a_3)=0$, with $x\in [0,1]$, then $P$ 
satisfies the equations. Compare with Example~\ref{532-E8}.

\begin{figure}[htb]
\begin{center}
\input 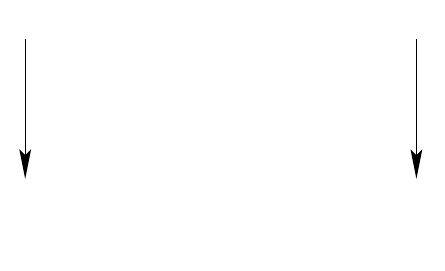_t
\end{center}
\caption{(a) A network and (b) one of its instantiations with 
$x=a_1\vee a_2$\label{fig:532-F53}}
\end{figure}
\end{example}

%%%%%%%%%
%%% Section 5 here
%%%%%%%%%
\section{Comparison with the literature}
There are several probabilistic argumentation papers around.  This is
a hot topic in 2014.  We highlight two main points of view. The
external and the internal views.

Let \tuple{S,R} be a network and let \Bf\ be a function from $S$ to
$[0,1]$.  We can regard \Bf\ as giving a probability number to each
$x\in S$.   
The internal probability is where the above numbers signify the value
of the argument.  Its truth, its reliability, its probability of being
effective, etc., or whatever measure we attach to it as an
argument. Figure \ref{532-F50} represents in this case the $\eqinv$
solution (and hence probability) of the network of Figures
\ref{532-F3} and \ref{532-F4}.
The external view is to think of $\Bf(x)$ as the probability
of the predicate ``$x\in S$''.  That is, the probability that the
argument $x$ is present in $S$.  Consider again Figure \ref{532-F50}.

\begin{figure}
\centering \input{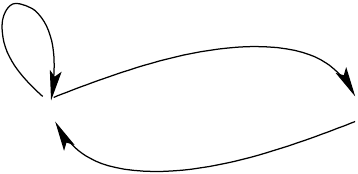_t}
\caption{The \eqinv\ solution to the networks of Figures~\ref{532-F3} 
and~\ref{532-F4}}\label{532-F50}
\end{figure}

The probability that $a$ is in the network is 0.236 and the
probability that $b$ is in the network is 0.382.  Therefore, the
probability that the network contains both $\{a,b\}$ is 0.236 $\times$
0.388 = 0.09.    The probability that the network contains only $a$ is  $0.236 \times (1- 0.382) = 0.1458$. The probability that the network contains only $b$ is $0.382 \times  (1 - 0.236) = 0.292$ and the probability that the network is empty is $(1 - 0.236) \times  (1 - 0.382) = 0.472$.
  It is clear why we are calling this view an external
probability view.   It imposes probability externally expressing uncertainty on what the network graph is. This is done either by giving the probability to points or more generally by giving probability directly to subsets $G$ of $S$, expressing the probability that the graph is really that subset of $S$ with $R$ restricted to $G$.  This external view has value in dialogue argumentation or negotiation when we try to estimate what network our opponent is reasoning with.  The problem with this external view is how to connect with the attack relation. Note that mathematically in the external view we have probabilities on points in $S$ or probabilities on subsets of $S$, which are the same options as in our internal view, but the understanding of  them is different. We in the internal view considered the subset as a classical model, while the external view considers it as a subnetwork. 
When we use the internal view, we can connect it with the attack relation via the equational approach (Equation~\eqr{E3}), but how would the external view connect with the attack relation?  We can ask, for example, how to get a value for a single point to be ``in'' an extension?
Intuitively, looking back at Figure \ref{532-F50},  we can say the point a for example is ``in''  in case the network is $\{a\}$ and is also ``in"  in one of the three extensions in case the network is $\{a,b\}$. So we might take the ``in" value to be  $0.1458 + 0.09/3 = 0.1458 + 0.03 = 0.1758$.   The connection with the attack relation can be done perhaps through the probabilities for admissible sets, since being admissible is connected with the attack relation. There are problems, however, with this approach.

%%% Insert 1 here
Hunter \cite{hunter:12} was trying to lay some foundations for this view, following the papers \cite{14,15}.  See also a good summary in  Hunter\cite{hunter:13}.  Hunter was trying to find a connection between the external probability view and some reasonable values we can give to admissible subsets. He proposes restrictions on the probability function on $S$.  We are not going to discuss or reproduce Hunter's arguments here. It suffices to say that  possibly a subsequent paper of ours will critically examine the external view and compare with the internal view.

Let us now  compare our work with that of M. Thimm, \cite{thimm:12}, whose
approach is also internal.  We quote from \cite{thimm:12}:
\begin{quotation}
``In this paper we use another interpretation for probability, that of
{\em subjective probability} \cite{15a}.  There, a probability $P(X)$
for some $X\in \CX$ denotes the {\em degree of belief} we put into
$X$. Then a probability function $P$ can be seen as an epistemic state
of some agent that has uncertain beliefs with respect to $\CX$.  In
probabilistic reasoning \cite{15a,16a}, this interpretation of
probability is widely used to model uncertain knowledge representation
and reasoning.

In the following, we consider probability functions on sets of
arguments of an abstract argumentation frameworks.  Let ${\sf AF} =
({\sf Arg}, \break \to)$ be some fixed abstract argumentation framework and
let $\CE =2^{\sf Arg}$ be the set of all sets of arguments. Let now
$\CP_{\sf AF}$ be the set of probability functions of the form $P:
2^\vare \to [0,1]$.  A probability function $P\in \CP_{\sf AF}$
assigns to each set of possible extensions of {\sf AF} a probability,
i.e.\ $P(e)$ for $e\in \CE$ is the probability that $e$ is an
extension and $P(E)$ for $E\subseteq\CE$ is the probability that any
of the sets in $E$ is an extension.  In particular, note the difference
between e.g.\ $P(\{\CA, \CB\})=P(\{\{\CA,\CB\}\})$ and
$P(\{\{\CA\},\{\CB\}\})$ for arguments $\CA,\CB$.  While the former
denotes the probability that $\{\CA,\CB\}$ is an extension the latter
denotes the probability that $\{\CA\}$ or $\{\CB\}$ is an
extension. In general, it holds $P(\{\CA,\CB\})\neq
P(\{\{\CA\},\{\CB\}\})$ .

For $P\in \CP_{\sf AF}$ and $\CA\in {\sf Arg}$ we abbreviate
\[
P(\CA) =\sum_{\CA\in e\subseteq {\sf Arg}}P(e).
\]
Given some probability function $P$, the probability $P(\CA)$
represents the degree of belief that $\CA$ is in an extension
(according to $P$), i.e.\, $P(\CA)$ is the sum of the probabilities of
all possible extensions that contain $\CA$.  The set $\CP_{\sf AF}$
contains all possible views one can take on the arguments of an
abstract argumentation framework {\sf AF}.

\noindent {\bf Example 4.}  We continue Ex. 1. (Comment by Gabbay and
Rodrigues: This is the network of our Figure 4.)  Consider the
function $P\in \CP_{\sf AF}$ defined via $P(\{\CA_1,\CA_3,\CA_5\}) =
03, P(\{\CA_1,\CA_4\})=0.45, P(\{\CA_5,\CA_2\})=0.1,
P(\{\CA_2,\CA_4\})=0.15$, and $P(3)=0$ for all remaining $e\in \CE$.
Due to Prop. 1 the function $P$ is well-defined as in, e.g.,
\[\begin{array}{l}
P(\{\{\CA_5,\CA_2\}, \{\CA_2,\CA_4\}, \{\CA_3\}\})\\[1ex]
 \quad =
P(\{\CA_5,\CA_2\}) +P(\{\CA_2,\CA_4\}) +P(\{\CA_3\})\\[1ex]
 \quad = 0.1 +
0.15 + 0=0.25.
\end{array}
\]
Therefore, $P$ is a probability function according to Def. 3.
According to $P$ the probabilities to reach argument of {\sf AF}
compute to $P(\CA_1)= 0.75, P(\CA_2)= 0.25, P(\CA_3 =0.3, P(\CA_4)
=0.6$, and $P(\CA_5) = 0.4$.

In the following, we are only interested in those probability
functions of $\CP_{\sf AF}$ that agree with our intuition on the
interrelationships of arguments and attack.  For example, if an
argument $\CA$ is not attacked we should completely believe in its
validity if no further information is available.  We propose the
following notion of {\em justifiability} to describe this intuition.

\noindent {\bf Definition 4.}  A probability function $P \in \CP_{\sf
  AF}$ is called $p$-{\em justifiable} wrt. {\sf AF}, denoted by
$P\Vdash _{\CJ} {\sf AF}$, if it satisfies for all $\CA\in {\sf Arg}$.
\begin{enumerate}
\item $P(\CA)\leq 1 - P(\CB)$ for all $\CB, \in {\sf Arg}$ with $\CB\to
  \CA$ and
\item $P(\CA)\geq 1 - \sum_{\CB\in\CF} P(\CB)$ where
  $\CF=\{\CB|\CB\to\CA\}$.
\end{enumerate}
Let $P^{\CJ}_{\sf AF}$ be the set of all $p$-justifiable probability
functions wrt. {\sf AF}.

The notion of $p$-justifiability generalizes the concept of complete
semantics to the probabilistic setting. Property 1.) says that the
degree of belief we assign to an argument $\CA$ is bounded from above
by the complement to $1$ of the degrees of belief we put into the
attackers of $\CA$. As a special case, note that if we completely believe
in an attacker of $\CA$, i.e., $P(\CB) = 1$ for some $\CB$ with 
$\CB\to \CA$, then it follows $P(\CA) = 0$. This corresponds to 
property 1.) of a complete labelling (see Section 2). Property 2.) 
of Def. 4 says that the degree of belief we assign to an 
argument $\CA$ is bounded from below by the inverse of 
the sum of the degrees of belief we put into the attacks of $\CA$.
As a special case, note that if we completely disbelieve in 
all attackers of $\CA$, i.e. $P(\CB)=0$ for all $\CB$ with $\CB \to \CA$, 
then it follows $P(\CA) = 1$. This corresponds to property 2.) of a complete
labeling, see Section 2. The following proposition establishes the
probabilistic analogue of the third property of a complete labelling.

\noindent {\bf Proposition 2.}  Let $P$ be $p$-justifiable and $\CA\in
          {\sf Arg}$.  If $P(\CA) \in (0,1)$ then
\begin{enumerate}
\item there is no $\CB\in {\sf Arg}$ with $\CB\to \CA$ and $P(\CB)=1$
  and 
\item there is a $\CB'\in {\sf Arg}$ with $\CB'\to \CA$ and
  $P(\CB')>0$.
\end{enumerate}
\end{quotation}

 From our point of view, Thimm's approach is a variant of our semantic
 Method 2 approach without the strong equation \eqr{E3} but the weaker
 Definition 4 of Thimm. Thus Thimm will allow for different values for
 nodes $x_1$ and $x_2$ in our Figure \ref{532-F22}, while we would not
 (see Example \ref{532-E10}).
 
 Although Thimm's approach is mathematically close to us, conceptually
 we are far apart.  Thimm motivates his approach as a degree of belief
 in a subset $E\subseteq S$, considering $E$ as an extension. We
 consider $E$ as representing a classical model $m$ of the classical
 propositional logic with atoms $S$
 \[
 m = \bigwedge_{s\in E} s\wedge\bigwedge_{s\not\in E}\neg s
 \]
 and assign probability to it and then we export this probability to
 argumentation via the equational approach, equation \eqr{E3}.
 
 This is an instance of our methodology of ``Logic by Translation'',
 From our point of view, equations \eqr{E3} are essential, conceptual and
 non-technical. For Thimm, the inequalities of his Definition 4 appear
 to be technical to enable the probabilities to work of ground
 extension.
 
 Our point of view also leads us to the $\eqinv$ Method 1
 probabilities and to the approximation results of Section 4.
 
 In Thimm's conceptual approach, this way of thinking does not even
 arise.
 
To summarise, this paper presented an internal view of probabilistic argumentation.
There is a need for two subsequent research papers
 \begin{enumerate}
 \item The external view done coherently and its connection to the internal view
 \item  A conditional probability view and its connection with Bayesian Networks views as Argumentation Networks
 \end{enumerate}
  
\section{Conclusions}
\newcommand{\BBS}{\mbox{\(\Bbb S\)}}
\newcommand{\BBA}{\mbox{\(\mathbb{A}\)}}

This section explains and sets our approach in a general generic context.

Suppose we are given a system $\BBS$ such as an argumentation system \tuple{S,R} and we want to add to it some aspect $\BBA$.

There is a generic way to add any new feature to a system. It involves 1) identifying the basic units which build up the system and 2) introducing the new feature to each of these basic units. In the case where the system is argumentation and the feature is probabilistic we have the following: the basic units are {\bf a.} the nature of the arguments involved; {\bf b.} the membership relation in the set $S$ of arguments;\footnote{Note that the set $S$ itself may not be fully or accurately known, especially modelling an opponent in dialogue systems.} {\bf c.} the attack relation; and {\bf d.} the choice of extensions.

Generically to add a new aspect  (probabilistic, or fuzzy, or temporal, etc) to an argumentation network \tuple{S,R} can be done by adding this feature to each component. {\bf a.} We make the effective strength of the argument probabilistic; {\bf b.} we give probability to whether an argument is included in $S$;\footnote{{\bf a.} and {\bf b.} are distinct, because {\bf a.} represents how effective an argument is, whereas {\bf b.} is the decision of whether or not to include an argument for consideration. An argument may be deemed very effective but not included for consideration for completely different reasons.} {\bf c.} we make the attack relation probabilistic; and {\bf d.} we put probability on the extensions.

These features interact and need to be chosen with care and coordination. We need a methodological approach to make our choices. One such methodology is what we called “logic by translation”.

We meaningfully translate the argumentation system into classical logic which does have probabilistic models and then  let probabilistic classical logic endow the probability on the argumentation system. As we mentioned, this of course depends on how we translate.

We gave in this paper an object-level translation. The arguments of $S$ became atoms of classical propositional logic, we then used probability on the models of classical logic and used the attack relation $R$ to express equational restrictions on the probabilities. In this kind of translation, the attack relation did not become probabilistic.

We could have used a meta-level translation into classical predicate logic, using a binary relation $R$ for expressing in classical logic the attack relation and using unary predicates to express that an argument $x$ is ``in'', $x$ is ``out'', etc., with suitable coordinating axioms. In this case all predicates would have become probabilistic including the attack relation $R$. As far as we know nobody has done this to $R$.

In this context of possible  options what we have done is one systematic approach and we compared it with other approaches. It should be noted that we could have followed the same steps to get fuzzy argumentation networks; temporal argumentation networks; or indeed any other feature available for classical propositional logic.

\bibliographystyle{plain} 
\bibliography{532-bib}

\end{document}